\documentclass[sigconf,nonacm]{aamas} 


\PassOptionsToPackage{numbers, compress}{natbib}
\usepackage[disable=True]{todonotes}
\newcommand{\piotrm}[1]{\textcolor{blue}{}}
\newcommand{\michalz}[1]{\textcolor{green!50!black}{}}
\newcommand{\blazej}[1]{\textcolor{magenta}{}}
\newcommand{\reviewers}[1]{\textcolor{violet}{}}

\usepackage{amsmath}
\DeclareMathOperator{\E}{\mathbb{E}}

\usepackage{graphicx}
\usepackage{appendix}
\usepackage{numprint}
\usepackage[final]{pdfpages}
\usepackage{xcolor}
\usepackage{comment}

\newcommand{\mt}{MA-Trace}

\newcommand{\vt}{V-Trace}
\newcommand{\imatrace}{DecMA-Trace}

\newtheorem{theorem}{Theorem}

\newtheorem{corollary}[theorem]{Corollary}
\newtheorem{remark}[theorem]{Remark}

\usepackage{balance}        
\usepackage[utf8]{inputenc} 
\usepackage[T1]{fontenc}    
\usepackage{hyperref}       
\usepackage{url}            
\usepackage{booktabs}       
\usepackage{amsfonts}       
\usepackage{nicefrac}       
\usepackage{microtype}      
\usepackage{xcolor}         
\usepackage{csvsimple}      
\usepackage{subcaption}     

\usepackage{algorithmicx}
\usepackage{algorithm}
\usepackage[noend]{algpseudocode}

\usepackage{wrapfig}
\usepackage{graphicx}

\usepackage{amsmath,amsfonts,amsthm}



\setcopyright{ifaamas}
\acmConference[AAMAS '22]{Proc.\@ of the 21st International Conference
on Autonomous Agents and Multiagent Systems (AAMAS 2022)}{May 9--13, 2022}
{Auckland, New Zealand}{P.~Faliszewski, V.~Mascardi, C.~Pelachaud,
M.E.~Taylor (eds.)}
\copyrightyear{2022}
\acmYear{2022}
\acmDOI{}
\acmPrice{}
\acmISBN{}



\acmSubmissionID{636}


\title{Off-Policy Correction For Multi-Agent Reinforcement Learning}


\author{Michał Zawalski}
\affiliation{
  \institution{University of Warsaw}
  \city{Warsaw}
  \country{Poland}}
\email{m.zawalski@uw.edu.pl}

\author{Błażej Osiński}
\affiliation{
  \institution{University of Warsaw}
  \city{Warsaw}
  \country{Poland}}
\email{b.osinski@mimuw.edu.pl}

\author{Henryk Michalewski}
\affiliation{
  \institution{Google Research}
  \country{}}
\email{henrykm@google.com}

\author{Piotr Miłoś}
\affiliation{
  \institution{Polish Academy of Sciences}
  \city{Warsaw}
  \country{Poland}}
\email{pmilos@impan.pl}


\begin{abstract}
Multi-agent reinforcement learning (MARL) provides a framework for problems involving multiple interacting agents.
Despite apparent similarity to the single-agent case, multi-agent problems are often harder to train and analyze theoretically.
In this work, we propose \mt{}, a new on-policy actor-critic algorithm, which extends \vt{} to the MARL setting.
The key advantage of our algorithm is its high scalability in a multi-worker setting.
To this end, \mt{} utilizes importance sampling as an off-policy correction method, which allows distributing the computations with no impact on the quality of training.
Furthermore, our algorithm is theoretically grounded -- we prove a fixed-point theorem that guarantees convergence.
We evaluate the algorithm extensively on the StarCraft Multi-Agent Challenge, a standard benchmark for multi-agent algorithms.
\mt{} achieves high performance on all its tasks and exceeds state-of-the-art results on some of them.
\end{abstract}



\keywords{Reinforcement Learning, V-Trace, Importance Sampling, Scalability}


         
\newcommand{\BibTeX}{\rm B\kern-.05em{\sc i\kern-.025em b}\kern-.08em\TeX}


\begin{document}


\pagestyle{fancy}
\fancyhead{}


\maketitle

\section{Introduction}
Reinforcement learning has witnessed impressive development in recent years. Famously, superhuman performance has been achieved in games Go \cite{alphago}, StarCraft II \cite{alphastar}, Dota 2 \cite{openaifive} and other applications. These successes are the result of rapid algorithmic development. Research in directions like trust-region optimization \cite{DBLP:journals/corr/SchulmanWDRK17}, principle of maximum entropy \cite{DBLP:conf/icml/HaarnojaZAL18}, importance sampling \cite{DBLP:conf/icml/EspeholtSMSMWDF18}, distributional RL \cite{DBLP:conf/icml/BellemareDM17} or bridging the sim-to-real gap \cite{DBLP:journals/ijrr/OpenAI20} are among these which brought significant progress. Multi-agent reinforcement learning (MARL), a framework for problems involving multiple interacting agents, is similar to the standard, single-agent setting. However, it is inherently harder. The challenges are both theoretical (e.g., partial observability and lack of the Markov property) and practical (MARL algorithms often suffer from inferior stability and scalability).

In this work, we take a step towards amending this situation. We propose \mt{}, a new on-policy actor-critic algorithm, which adheres to the centralized training and decentralized execution paradigm \cite{DBLP:conf/nips/LoweWTHAM17, 6_DBLP:conf/aaai/FoersterFANW18, 4_DBLP:conf/icml/RashidSWFFW18}. The key component of \mt{} is the usage of importance sampling. This mechanism, based on \vt{} \cite{DBLP:conf/icml/EspeholtSMSMWDF18}, provides off-policy correction for training data. As we demonstrate empirically, it allows distributing the computations efficiently in a multi-worker setup. Another advantage of \mt{} is the fact that it is theoretically grounded. We provide a fixed-point theorem that guarantees convergence.

The on-policy algorithms directly optimize the objective; thus, they tend to be more stable and robust to hyperparameter choices than off-policy methods \cite{spinningup, DBLP:journals/tac/TsitsiklisR97, sutton2018reinforcement}. However, it is often impractical to train an on-policy algorithm in the distributed setting.
When data collection is performed using many workers, the communication latency, asynchronicity, and other factors make the behavioral policies lag behind the target one. This results in a shift of the collected data towards off-policy distribution, which hurts the training quality. \vt{} reduces this shift by utilizing importance weights, thus permitting highly scalable training. Following that scheme, \mt{} can be distributed to many workers to vastly reduce the wall-time of training with no negative impact on the results.

We evaluate \mt{} on StarCraft Multi-Agent Challenge \cite{DBLP:conf/atal/SamvelyanRWFNRH19} -- a standard benchmark for multi-agent algorithms. Our approach achieves competitive performance on all tasks and exceeds state-of-the-art results on some of them. Additionally, we provide a comprehensive set of ablations to quantify the influence of each component on the final results. We confirm that importance sampling is a key factor for \mt's performance
and show that our algorithm scales favorably with the number of actor workers. Additionally, we provide a few quite surprising findings, e.g. that an observation-based critic network performs better than a state-based.

For the description of key ideas and videos, visit our webpage: \url{https://sites.google.com/view/ma-trace/main-page}. The code used for our experiments is available at \url{https://github.com/awarelab/seed_rl}. 

Our main contributions are the following:
\begin{enumerate}
    \item We introduce \mt{} -- a simple, scalable and effective multi-agent reinforcement learning algorithm with theoretical guarantees.
    \item We confirm that the training of \mt{} can be easily distributed on multiple workers with nearly perfect speed-up and no negative impact on the quality.
    \item We provide extensive experimental validation of the \mt{} algorithm in StarCraft Multi-Agent Challenge, including ablations with regard to importance sampling, centralization of learning, scaling and sharing of parameters.
\end{enumerate}

\section{Related work}

For a general overview of multi-agent reinforcement learning (MARL) we refer to \cite{busoniu2008comprehensive, 1_DBLP:conf/atal/Hernandez-LealK20}. Unsurprisingly, the development of MARL methods is closely coupled with the algorithmic progress in RL. A simple approach to multi-agent learning was proposed by \citet{DBLP:conf/icml/Tan93}: the IQL algorithm uses independent $Q$ learners for each agent, with improvements proposed in \cite{DBLP:conf/icml/FoersterNFATKW17, DBLP:conf/icml/LauerR00, omidshafiei2017deep}.

\mt{} adheres to the centralized training and decentralized execution (CTDE) paradigm. CTDE \cite{kraemer2016multi, DBLP:series/sbis/OliehoekA16, DBLP:conf/nips/FoersterAFW16} is based on using the centralized information during training. During execution, the agents act using only their respective observations. Following this scheme, \cite{DBLP:conf/nips/FoersterAFW16} introduces the RIAL and DIAL algorithms in the context of $Q$-learning. CTDE is particularly easy to implement with actor-critic algorithms; the centralized information is imputed only to the critic network (which is not used during the execution). COMA \cite{6_DBLP:conf/aaai/FoersterFANW18} is an example of such an algorithm; additionally, it uses a counterfactual baseline to deal with multi-agent credit assignment explicitly.

Another approach to take advantage of the multi-agent structure is the \textit{value decomposition} method. VDN \cite{sunehag2017valuedecomposition} propose a linear decomposition of the collective $Q$ function into agent-local $Q$ functions.
Following this idea, \cite{4_DBLP:conf/icml/RashidSWFFW18} introduced QMIX, which learns a complex state-dependent decomposition by using monotonic mixing hypernetworks. Extensions of QMIX include MAVEN \cite{DBLP:conf/nips/MahajanRSW19}, COMIX \cite{de2020deep}, SMIX($\lambda$) \cite{DBLP:conf/aaai/WenYWT20}, and QTRAN \cite{DBLP:conf/icml/SonKKHY19} that can represent even general non-monotonic factorizations.

\mt{} is based on \vt{} \cite{DBLP:conf/icml/EspeholtSMSMWDF18}, a distributed single-agent algorithm. The idea of extending RL algorithms to the multi-agent setting has been successfully executed multiple times. \citet{DBLP:conf/nips/LoweWTHAM17} propose a multi-agent actor-critic algorithm MADDPG, which is based on the DDPG algorithm \cite{lillicrap2019continuous}. \citet{yu2021benchmarking} introduce also MASAC, extending SAC \cite{haarnoja2018soft}, and MATD3 building on top of TD3 \cite{DBLP:conf/icml/FujimotoHM18}. Recently \cite{DBLP:journals/corr/abs-2103-01955} showed that MAPPO, a multi-agent version of PPO \cite{DBLP:journals/corr/SchulmanWDRK17}, achieves surprisingly strong results in the most popular benchmarks, comparable with off-policy methods. 

\citet{DBLP:conf/icml/EspeholtSMSMWDF18} propose the \vt{} algorithm to address the problem that in distributed (e.g. multi-node) training the policy used to generate experience is likely to lag behind the policy used for learning. \citet{munos2016safe} considered earlier a similar off-policy corrections for the target of the $Q$-function. These corrections are intended to focus on samples generated by behavioral policies close to the target one. Leaky \vt{}, a more general version of the \vt{} correction, was considered by \citet{DBLP:conf/nips/ZahavyXVHOHSS20}. \citet{vinyals2019grandmaster} adapt \vt{} importance corrections to large action space to train grandmaster level StarCraft II agents. These corrections are refinements of the concept of importance sampling; see \cite[Sections 5.5, 12.9]{sutton2018reinforcement} for a broader discussion.

Blending all these concepts, DOP \cite{DBLP:journals/corr/abs-2007-12322} utilizes value decomposition and importance sampling to successfully train decentralized agents with policy gradients on off-policy samples. This is substantially different from our work since in \mt{} we use importance weights to enable efficient multi-node training. DOP does not consider distributing the computations, the objective optimized by that algorithm requires providing on-policy samples, which is impossible to satisfy in a highly distributed setting.

\section{Background}

Multi-agent reinforcement learning task is formalized by \textit{decentralized partially observable Markov decision processes} (Dec-POMDP)~\cite{DBLP:series/sbis/OliehoekA16}. A Dec-POMDP is defined as a tuple $(\mathcal N, \mathcal S, \mathcal A, P, r, \mathcal Z, O, \gamma, \rho_0)$. 
$\mathcal N$ is the set of agents $\{1,\ldots,n\}$, $\mathcal S$ is the state space, $\mathcal{A}$ is the set of actions available to agents, $P$ is the transition kernel, $r$ is the reward function, $\mathcal Z$ is the space of collective observations, $O$ is the set of observation functions $\{O_1,\ldots,O_n\}$, $\gamma$ is the discount factor and $\rho_0$ is the initial state distribution. At state $s\in \mathcal{S}$, the agents select actions $a_i \sim \pi_i(\cdot|O_i(s))$, where $\pi_i$ are their respective polices. Fix $a:= (a_1, \ldots, a_n)$. The agents receive rewards according to the reward functions $r_i=r_i(s,a)$ and the system evolves to the next step generated by $P(s,a)$ (might be stochastic). In the so-called fully cooperative setting, assumed in this work, the rewards are equal, i.e. $r_1=\ldots=r_n$.

The agents learn a joint policy
\begin{equation} \label{eq:factorised_policy}
\pi(a|o) = \prod_{i=1}^n \pi_i(a_k|o_k)
\end{equation}
with the aim to maximize the expected discounted return
$$J(\pi)=\E_{\pi}\left[\sum_{t=0}^\infty \gamma^tr_t\right].$$
The expected discounted return obtained by policy $\pi$ starting from state $s\in\mathcal S$ is called the value function
\begin{equation} \label{eq:value_function}
V^{\pi}(s) = \E_{\pi}\left[\sum_{t=0}^{\infty} \gamma^t r_t |s_0 = s\right],
\end{equation}

\section{\mt{} algorithm}\label{section:matrace}

\subsection{Overview of the algorithm}
In this work, we introduce a multi-agent actor-critic algorithm based on V-trace: \textbf{\mt{}}, see Algorithm \ref{algo:mt}. It follows the paradigm of centralized training, decentralized execution. Each agent compute its action taking its local observation as input.
On the other hand, the critic network operates only during training, so it does not need to obey decentralization requirements. Furthermore, it can utilize any kind of additional information.
We study two versions of \mt{}: with the critic $V:\mathcal{S}\to \mathbb{R}$ taking as inputs full states, and with the critic $V:\mathcal{Z} \to \mathbb{R}$ taking as input the joint observation of all agents, denoted respectively as \mt{} (full) and \mt{} (obs).
\mt{} (full) requires collecting states $s_t$ in line 6 of Algorithm \ref{algo:mt} and using them as input to $V_\phi$ in lines 10 and 14. 

The value function $V^\pi$ corresponding to policy $\pi$ can be, for example, obtained by repeated application of the Bellman operator. This requires on-policy data.
The central innovation of \vt{} in the single-player setting and \mt{} in the multi-player setting is to allow for slightly off-policy data by utilizing importance sampling. To this end, we use the \vt{}-inspired policy evaluation operator $\mathcal{R}$, defined as
\begin{equation}\label{eq:vtrace_v1}
\begin{split}
	\mathcal{R} V(s) &:= V(s) + \\
	 &\E_{\mu}{\left[\sum_{t=0}^{+\infty}{\gamma^t(c_0 \cdots c_{t-1})\rho_t(r_t + \gamma V(s_{t+1}) - V(s_t))}| s_0 = s\right]},
\end{split}
\end{equation}
where $c_t= c(s_t, a_t), \rho_t:=\rho(s_t, a_t)$ are importance sampling corrections and 
$c:\mathcal{S}\times \mathcal{A}\to \mathbb{R}_+ , \rho: \mathcal{S}\times \mathcal{A}\to \mathbb{R}_+$ are measurable functions. In our algorithms we specialize to
\begin{equation}\label{eq:corrections}
	c_t := \min \left(\overline{c}, \frac{\pi(a_t|s_t)}{\mu(a_t|s_t)}\right), \quad \rho_t := \min \left(\overline{\rho}, \frac{\pi(a_t|s_t)}{\mu(a_t|s_t)}\right),
\end{equation}
where $\mu$ is a policy that collected the data and $\overline{c}, \overline{\rho}$ are hyperparameters (usually set to $1.0$). Intuitively speaking, $c_t$ controls the speed of training and $\rho_t$ balances the learned value function between $V^\pi$ and $V^\mu$. These parameters are further discussed in Corollary \ref{cor:fixed_point_corollary}. The operator $\mathcal{R}$ leads to $n$-step Monte-Carlo target $v_t$ given the state $s_t$
\begin{equation}\label{eq:mtrace_target_1}
    v_t := V(s_t) + \sum_{u=t}^{t+n+1}\gamma^{u-t}\left(\prod_{i=t}^{u-1}c_i\right)\rho_u\big(r_u + \gamma V(s_{u+1}) - V(s_u)\big).
\end{equation}
It is a random variable; the clipping with the $\min$ function in \eqref{eq:corrections} is instrumental to reducing its variance and thus making it applicable in learning. A key advantage of \mt{} is a significant reduction in the wall-time due to distributed data collection (see line 6 in Algorithm \ref{algo:mt}). From the algorithmic standpoint, the major problem to address is that the policy used for collection $\pi_{\theta'}$ might be outdated due to communication overheads. This is successfully achieved with the importance correction mechanisms described above. 

We use the communication model proposed in \cite[Figure 1, Figure 3]{espeholt2019seed}. It consists of a single learner and actor workers. The actors are simple loops around the environment, generating observations (and rewards) transmitted to the learner. The learner makes inferences (and sends back actions); moreover, it handles trajectory accumulation and training.

\begin{algorithm}[h]
  \begin{tabular}{ l c l }
    \textbf{Require: }
    & $d$ & density of training \\
    & $\alpha$ & learning rate \\

\end{tabular}
\begin{algorithmic}[1]
\For{$k$ \textbf{in} $0,\ldots,n-1$}
\State $\omega_k \gets$ random actor parameters
\EndFor
\State $\phi \gets$ random critic parameters
\For{$epoch$ \textbf{in} $0,1,2,\ldots$}
\State $\mathcal{D} \gets \emptyset$
\State add trajectories $\{\tau_i\}$ sampled with $\pi_{\theta'}$ to $\mathcal{D}$ 
\State  \Comment{collected by multiple workers possibly remote. }
\For{$i$ \textbf{in} $0, \ldots, d-1$}
\State sample $s_t \sim \mathcal{D}$
\State $\phi \gets \phi - \alpha \nabla_\phi \left[ \|v_t - V_\phi(s_t) \|^2_2 \right]$ \Comment{$v_t$ is calculated according to \eqref{eq:mtrace_target_1}}
\EndFor
\For{$i$ \textbf{in} $0, \ldots, d-1$} \label{ln:actor-begin}
\State sample $s_t \sim \mathcal{D}$
\For{$k$ \textbf{in} $1,\ldots, n$}
\State $g_k \gets {\rho_t} \nabla_{\omega} \log(\pi_{\omega}(a_{t, k}|s_{t, k}))(r_{t,k } + \gamma V_\phi^k(s_{t + 1}) - V^k_\phi(s_t))$ 
\State \Comment{$\rho_t$ is calculated according to \eqref{eq:corrections}}
\State $\omega_k \gets \theta_k + \alpha g_k$ 
\EndFor
\EndFor
\EndFor
\end{algorithmic}
\caption{\mt{}}
\label{algo:mt}
\end{algorithm}

\subsection{Theoretical analysis of \mt{}}\label{section:theoretical_analysis}
The operator, $\mathcal{R}$ enjoys the fixed point property. We present a proof of the following Theorem in 
Appendix \ref{sec:proof}. 

\begin{theorem}\label{thm:fixed_point_theorem_abstract}
	Let $c, \rho$ be such that for any $s \in \mathcal{S}, a\in \mathcal{A}$
\begin{equation} \label{eq:alphadefintion_1}
  \rho(s,a)-c(s,a)\E_{a'\sim\mu(\cdot|s')}\left[\rho(s',a')\right]\geq0,
\end{equation}
where $s'$ is the state obtained from $s$ after issuing action $a$.  Assume also that $\E_{\mu}\rho_{0}\geq\beta\in(0,1]$. Then the operator $\mathcal{R}$ is a $\mathcal{C}_{\infty}$ contraction with a unique fixed point $V^{\tilde{\pi}}$ which is a value function of a policy $\tilde{\pi}$ given by
\begin{equation}\label{eq:policy_after_corrections}
  \tilde{\pi}(a|s):=\frac{\rho(s,a)\mu(a|s)}{\sum_{b\in\mathcal{A}}\rho(s,b)\mu(b|s)}. 
\end{equation}
The contraction constant is smaller than $1-(1-\gamma)\beta<1$.
\end{theorem}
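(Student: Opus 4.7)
The plan is to follow the V-Trace fixed-point argument of Espeholt et al., adapted only superficially to the multi-agent notation, since the operator $\mathcal{R}$ in \eqref{eq:vtrace_v1} depends on the joint policy $\pi$ and the joint action $a=(a_1,\dots,a_n)$ but is otherwise structurally identical to the single-agent case. First I would verify that $\mathcal{R}$ acts on the Banach space $\mathcal{C}_\infty$ of bounded measurable functions: since $c_t\le\bar c$ and $\rho_t\le\bar\rho$ by construction \eqref{eq:corrections}, and rewards are bounded, absolute convergence of the series inside the expectation follows from the telescoping identity derived below, which controls the $c_{0:t-1}\rho_t$-weighted sums.

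The bulk of the argument is the contraction estimate. I would fix $V_1,V_2\in\mathcal{C}_\infty$, put $W:=V_1-V_2$, note that the reward terms cancel in $\mathcal{R}V_1-\mathcal{R}V_2$, and split the remaining sum into the $V(s_{t+1})$ and $V(s_t)$ parts. Shifting the index in the first part ($u=t+1$) and combining with the second yields
\begin{equation*}
\mathcal{R}V_1(s)-\mathcal{R}V_2(s) = W(s)\bigl(1-\E_\mu[\rho_0\mid s_0=s]\bigr) + \E_\mu\!\left[\sum_{t\ge 1}\gamma^t c_{0:t-2}\bigl(\rho_{t-1}-c_{t-1}\rho_t\bigr)W(s_t)\right].
\end{equation*}
The crucial observation is that assumption \eqref{eq:alphadefintion_1}, after conditioning on $(s_{t-1},a_{t-1})$ and averaging $a_t\sim\mu(\cdot|s_t)$, makes every coefficient of $W(s_t)$ in the sum non-negative. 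Consequently $|\mathcal{R}V_1(s)-\mathcal{R}V_2(s)|$ is bounded by $\|W\|_\infty$ times that same expression with $W(s_t)$ replaced by $1$.

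Next I would evaluate this coefficient by a telescoping manipulation:
\begin{equation*}
\sum_{t\ge 1}\gamma^t c_{0:t-2}\bigl(\rho_{t-1}-c_{t-1}\rho_t\bigr) = \gamma\rho_0 - (1-\gamma)\sum_{t\ge 1}\gamma^t c_{0:t-1}\rho_t,
\end{equation*}
so that the full coefficient simplifies to $1-(1-\gamma)\E_\mu\!\left[\sum_{t\ge 0}\gamma^t c_{0:t-1}\rho_t\right]$. Using $\rho_t,c_t\ge 0$ and dropping all terms except $t=0$ gives the bound $1-(1-\gamma)\E_\mu[\rho_0\mid s_0=s]\le 1-(1-\gamma)\beta<1$ by hypothesis, proving the contraction with the claimed constant and thus existence and uniqueness of a fixed point by the Banach theorem.

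Finally I would identify the fixed point with $V^{\tilde\pi}$ by verifying $\mathcal{R}V^{\tilde\pi}=V^{\tilde\pi}$ directly. Setting $Z(s):=\sum_b\rho(s,b)\mu(b|s)$ and using \eqref{eq:policy_after_corrections}, one gets the elementary identity $\sum_a\mu(a|s)\rho(s,a)f(s,a)=Z(s)\,\E_{a\sim\tilde\pi(\cdot|s)}[f(s,a)]$; applied to $f(s,a)=r(s,a)+\gamma\E_{s'}[V^{\tilde\pi}(s')]-V^{\tilde\pi}(s)$ and combined with the Bellman equation for $\tilde\pi$, it makes the inner conditional expectation at every time $t$ vanish, so the entire sum in \eqref{eq:vtrace_v1} is zero and $\mathcal{R}V^{\tilde\pi}(s)=V^{\tilde\pi}(s)$. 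The main obstacle I expect is the non-negativity of the individual terms in the telescoped sum: the assumption \eqref{eq:alphadefintion_1} is tailored precisely so that after integrating out the action at time $t$ under $\mu$ the bracketed coefficient $\rho_{t-1}-c_{t-1}\rho_t$ retains the correct sign, and some care (possibly a tower-property argument conditioning on the filtration generated by $(s_0,a_0,\dots,s_{t-1})$) is needed to formalize this when the transitions are stochastic.
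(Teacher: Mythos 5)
Your proposal follows essentially the same route as the paper's proof: the same decomposition of $\mathcal{R}V_1-\mathcal{R}V_2$ after cancelling rewards and shifting indices, the same use of assumption \eqref{eq:alphadefintion_1} via a tower-property argument to make the coefficients of $W(s_t)$ non-negative, the same telescoping evaluation giving the constant $1-(1-\gamma)\beta$ by keeping only the $t=0$ term, and the same Bellman-equation verification that $V^{\tilde\pi}$ is the fixed point. The only nit is that the conditioning $\sigma$-algebra you sketch should be $\sigma(s_0,a_0,\ldots,a_{t-1},s_t)$ (including $a_{t-1}$ and $s_t$) so that exactly $a_t$ is integrated out; this is precisely the filtration $\mathcal{F}_t$ the paper introduces to fix the corresponding step in the original \vt{} proof.
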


\begin{remark}
  The theorem is an extended version of \cite[Theorem 1]{DBLP:conf/icml/EspeholtSMSMWDF18}. First, we assume the vectorized statement, which is natural for the multi-agent setting. Second, the condition \eqref{eq:alphadefintion_1} admits more general importance sampling weights. We also fix a mathematical inaccuracy present in the original proof of \cite[Theorem 1]{DBLP:conf/icml/EspeholtSMSMWDF18}, see Remark \ref{eq:original_proof_glitch}. 
\end{remark}

Now we can easily show that the result follows for the importance weights used in our work.

\begin{corollary}\label{cor:fixed_point_corollary}
	Let $c_t, \rho_t$ be importance sampling weights \eqref{eq:corrections} and $0\leq \overline{c} \leq \overline{\rho}$. Assume also that $\E_\mu \rho_0 \geq \beta \in (0,1]$. Then the operator $\mathcal{R}$ is a $\mathcal{C}_\infty$ contraction with a unique fixed point $V^{\tilde{\pi}}$ which is a value function of a policy $\tilde{\pi}$ given by
	\[
		\tilde{\pi}(a|s) := \frac{\min\left(\overline{\rho}\mu(a|s), \pi(a|s)\right)}{\sum_{b\in \mathcal{A}}\min\left(\overline{\rho}\mu(b|s), \pi(b|s)\right)}.
	\]
	The contraction constant is smaller than $1-(1-\gamma)\beta<1$.
\end{corollary}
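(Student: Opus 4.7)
The plan is to obtain the corollary as a direct consequence of Theorem \ref{thm:fixed_point_theorem_abstract}. Since that theorem already provides the contraction property, the identification of the fixed point, and the quantitative bound on the contraction constant, the only work left is (i) to verify that the clipped weights in \eqref{eq:corrections} satisfy the hypothesis \eqref{eq:alphadefintion_1}, and (ii) to specialize the expression \eqref{eq:policy_after_corrections} for the fixed-point policy to the present choice of $\rho$.

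For (i), I would start by noting the pointwise inequality $c(s,a) \leq \rho(s,a)$. This is immediate from $\overline{c} \leq \overline{\rho}$ by a short case analysis on where $\pi(a|s)/\mu(a|s)$ sits relative to the two thresholds: in each of the three cases both minima are explicitly computable and the inequality is trivial. Next I would bound, for any $s'$,
\[
  \E_{a' \sim \mu(\cdot|s')}\bigl[\rho(s',a')\bigr] = \sum_{a'} \min\bigl(\overline{\rho}\,\mu(a'|s'),\, \pi(a'|s')\bigr) \leq \sum_{a'} \pi(a'|s') = 1.
\]
Combining the two facts gives
\[
  c(s,a)\,\E_{a' \sim \mu(\cdot|s')}\bigl[\rho(s',a')\bigr] \leq c(s,a)\cdot 1 \leq \rho(s,a),
\]
which is exactly \eqref{eq:alphadefintion_1}. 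Since the remaining hypothesis $\E_\mu \rho_0 \geq \beta$ is assumed in the statement, Theorem \ref{thm:fixed_point_theorem_abstract} applies and yields the $\mathcal{C}_\infty$ contraction property together with the bound $1-(1-\gamma)\beta$ on the contraction constant.

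For (ii), I would substitute $\rho(s,a) = \min(\overline{\rho},\, \pi(a|s)/\mu(a|s))$ into \eqref{eq:policy_after_corrections}. Because $\mu(a|s) \geq 0$, it can be pulled inside the minimum, so the product $\rho(s,a)\mu(a|s)$ equals $\min(\overline{\rho}\,\mu(a|s),\,\pi(a|s))$ in both numerator and denominator, yielding exactly the stated formula for $\tilde{\pi}$.

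I do not anticipate a real obstacle: the argument is essentially algebraic. The only mildly nontrivial observation is that the two elementary bounds $c \leq \rho$ and $\E_\mu \rho \leq 1$ together make \eqref{eq:alphadefintion_1} automatic for the clipped weights, which is precisely what allows the general abstract theorem to be applied with no additional hypothesis on the pair $(\pi,\mu)$ beyond the assumed lower bound $\E_\mu \rho_0 \geq \beta$.
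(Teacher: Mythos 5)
Your proposal is correct and follows essentially the same route as the paper's own proof: verify hypothesis \eqref{eq:alphadefintion_1} by combining the pointwise bound $c(s,a)\le\rho(s,a)$ (immediate from $\overline{c}\le\overline{\rho}$) with $\E_{a'\sim\mu(\cdot|s')}\left[\rho(s',a')\right]\le 1$, then invoke Theorem \ref{thm:fixed_point_theorem_abstract}. Your write-up is in fact slightly more explicit than the paper's, which leaves the $c\le\rho$ step and the substitution of $\rho$ into \eqref{eq:policy_after_corrections} implicit.
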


\begin{proof}
   It is easy to check that $c(s,a)=\min\left(\overline{c},\frac{\pi(a|s)}{\mu(a|s)}\right)$,
   $\rho(s,a)=\min\left(\overline{\rho},\frac{\pi(a|s)}{\mu(a|s)}\right)$ yield the importance sampling weights \eqref{eq:corrections}. Moreover, for any $s'\in \mathcal{S}$
   \begin{align*}
    \E_{a'\sim\mu(\cdot|s')}\left[\rho(s',a')\right]&=\E_{a'\sim\mu(\cdot|s')}\min\left(\overline{\rho},\frac{\pi(a'|s')}{\mu(a'|s')}\right)\\
    &\leq\E_{a\sim\mu(\cdot|s')}\left(\frac{\pi(a'|s')}{\mu(a'|s')}\right)\\
    &=1
    \end{align*}
    and therefore \eqref{eq:alphadefintion_1} holds whenever $0\le\overline{c}\leq\overline{\rho}$. Now the result follows by Theorem \ref{thm:fixed_point_theorem_abstract}. 
\end{proof} 

Observe that when $\overline{\rho}$ is infinite, the fixed point $V^{\tilde{\pi}}$ corresponds to the target policy $\pi$. On the other hand, when $\overline{\rho}$ tends to 0, $V^{\tilde{\pi}}$ gets close to the value of the behavioral policy $\mu$. In the general case, when $\overline{\rho}$ is finite but positive, the fixed point is the value function of a policy located somewhere between $\pi$ and $\mu$. However, $\tilde\pi$ does not depend on $c_t$ -- these weights affect the speed of convergence only \cite{DBLP:conf/icml/EspeholtSMSMWDF18}.

\begin{remark} \label{rem:theoretical_issues} There is a theoretical difference between \mt{} (full) and \mt{} (obs), which perhaps is subtle in some cases. The Markov property is a key element required in the proof of Corollary \ref{cor:fixed_point_corollary}. While it is by definition true for \mt{} (state) it might fail for \mt{} (obs) - if the concatenated observations do not provide a sufficient statistic of $s_t$. 
\end{remark}

For the actor network we use policy gradient updates. Here we also need importance sampling to correct for using the off-policy behavioral policy $\mu$. We recall the factorization \eqref{eq:factorised_policy}; analogously we denote joint decentralized parameterized policy $\pi_{\omega}(a_1, \ldots, a_K|s) = \prod_{k=1}^K \pi_{\omega}(a_i|s_i)$. The policy gradient theorem suggests the ascent in the direction:
\[
	g:=\E_{{a}_t \sim \pi_{\omega}} \left[\nabla_\omega \log \pi_{\omega}({a}_t|s_t) A^{\pi_{\omega}}(s_t, {a}_t))\right],
\]
where ${a}_t = (a_{t, 1}, \ldots, a_{t, K})$ and $A^{\pi_{\omega}}$ is some advantage estimator. For off-policy data collected with $\mu$ we have 
\[
	g \approx \E_{{a}_t \sim \mu} \left[\rho_t\nabla_\omega \log \pi_{\omega}({a}_t|s_t) A^{\pi_{\omega}}(s_t, {a}_t))\right],
\]
where the equality holds if $\overline{c} = +\infty$ in \eqref{eq:corrections}. This formula leads to the practical Monte-Carlo estimator used in line 14 of Algorithm \ref{algo:mt}:
\begin{equation}\label{eq:policy_gradient}
{\rho_t} (\nabla_{\omega_i} \log(\pi_{\omega_i}(a_{t, i}|s_{t, i})))(r_{t} + \gamma V(s_{t + 1}) - V(s_t)).	
\end{equation}

\section{Experiments}
\subsection{Environment}
We evaluate \mt{} on StarCraft Multi-Agent Challenge (SMAC) \cite{samvelyan19smac} (version 4.10), which is a standard benchmark for multi-agent algorithms, based on a popular real-time strategy game StarCraft~II.
It provides 14 micromanagement tasks of varying difficulty and structure.  

The aim is to win a battle against a built-in AI by using your team of agents. In easier tasks, often rudimentary coordination is enough. However, harder tasks involve engaging a stronger enemy (e.g., having more units), which requires inventing smart techniques and tricks. Each unit has a limited line of sight, which makes the environment partially observable. We provide more details in Appendix \ref{sec:smac_appendix}.

\subsection{Main result}\label{section:results}

In Table \ref{tab:results} and Figure \ref{fig:bar_results}, we present the results of the main version of our algorithm -- \mt{} (obs), in which the critic uses stacked observations of agents as described in Appendix \ref{sec:smac_appendix}. \mt{} (obs) reaches competitive results and in some cases exceeds the current state-of-the-art. We compare with a selection of the state-of-the-art algorithms on SMAC following \cite{DBLP:journals/corr/abs-2103-01955} and \cite{DBLP:journals/corr/abs-2003-08839}. We also demonstarte scalability, which lets \mt{} can reach good performance even using short training (wall time).

\begin{table}[h!] 
    \footnotesize
    \centering
    \csvautobooktabular[respect underscore]{eval_results.csv}
    \bigskip
    \caption{\small Median win rate of \mt{} (obs) compared with other algorithms. In \textit{3s\_vs\_5z}, our agent discovers that keeping the opponents alive leads to higher rewards than killing them. This strategy, however, yields a low win rate. See Appendix~\ref{section:3svs5z} for a detailed study.}
    \label{tab:results}
\end{table}

\begin{figure}[h!]
    \centering
    \includegraphics[width=\linewidth]{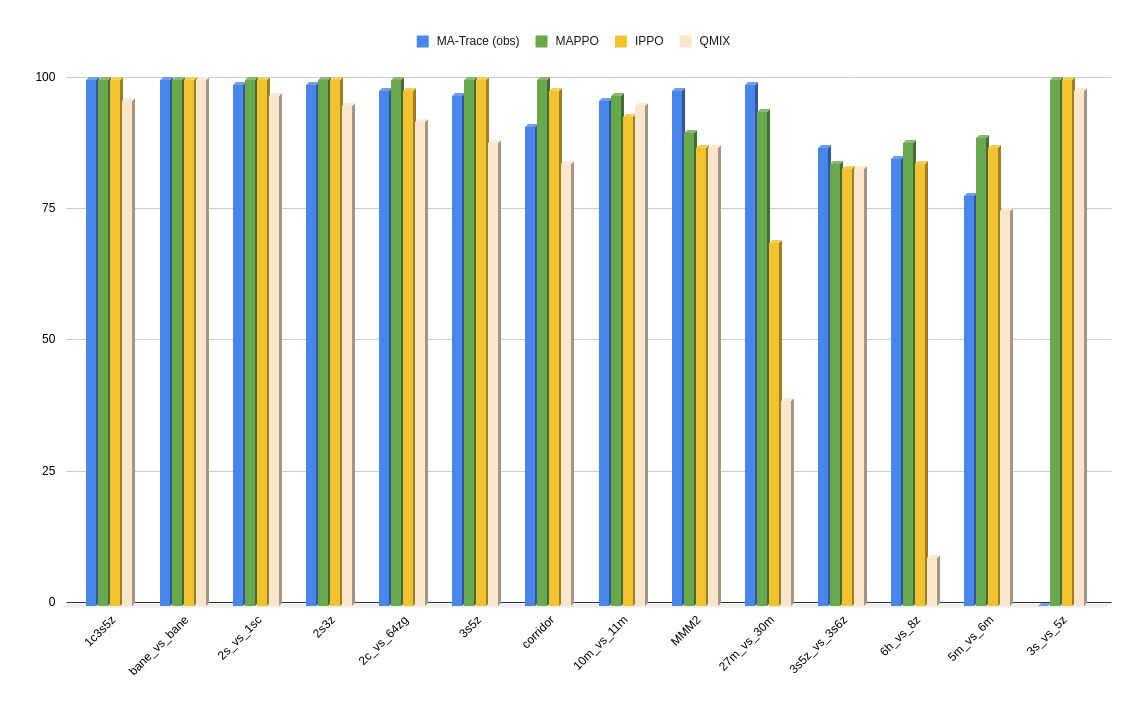}
    \caption{\small \mt{} compared with state-of-the-arts algorithms on SMAC.}
    \label{fig:bar_results}
\end{figure}

\subsection{Training details}\label{section:training_details}
We use standard feed-forward networks for the actor and critic networks with two hidden layers of $64$ neurons and ReLU activations. The critic network of \mt{} (obs) takes stacked observations of agents as input, while \mt{} (full) utilizes the full state provided by SMAC. \imatrace{} have a critic using single-agent observations. See details in Appendix \ref{sec:architectures_appendix}.

For each reported version of \mt{}, we have searched for the best hyperparameters to ensure a fair comparison.
The values of all hyperparameters are listed in Appendix \ref{sec:hiperparemters}.

To estimate the performance of \mt{} we run training for $3$ days or until convergence. We report the median win rate of $10$ runs (with different random seeds) along with the interquartile range. Training curves for all the tasks can be found in Appendix~\ref{section:all_training_data}.

\subsection{Ablations}
Below we present a comprehensive list of ablations to evaluate the design choices of our algorithm. In each case, we present training curves for tasks, which best illustrate our claims. For the complete training results and more details, we refer to Appendix~\ref{sec:ablations_appendix}.

\paragraph{Advantage of using importance sampling.}
Using the importance weights is the key algorithmic innovation of \mt{} (and \vt{}), responsible for the strong performance we report. Indeed, already for $30$ actor workers, using the weights is essential. Otherwise, the algorithm is unstable and suffers from poor asymptotic performance. See Figure \ref{fig:IS_ablation} and Appendix \ref{sec:is_ablation_appendix}.

\begin{figure}[h!]
    \centering
    \includegraphics[width=\linewidth]{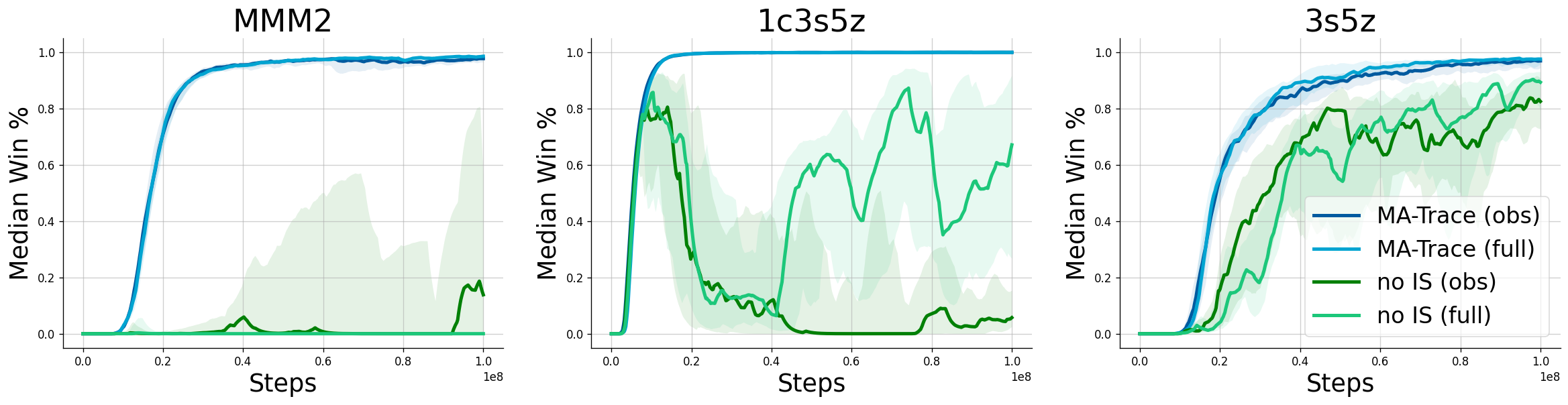}
    \caption{\small \mt{} using 30 distributed workers with and without importance sampling (no IS). }
    \label{fig:IS_ablation}
\end{figure}

\paragraph{Training scaling.}
The importance sampling enables \vt{} to be truly scalable in multi-node setups. \mt{} enjoys the same property. Importantly, we do not observe any degradation in the training performance when trained in the multi-node setup. See Figure \ref{fig:scaling} and Appendix \ref{sec:scaling_experiments}.

\begin{figure}[h!]
    \centering
    \includegraphics[width=0.6\linewidth]{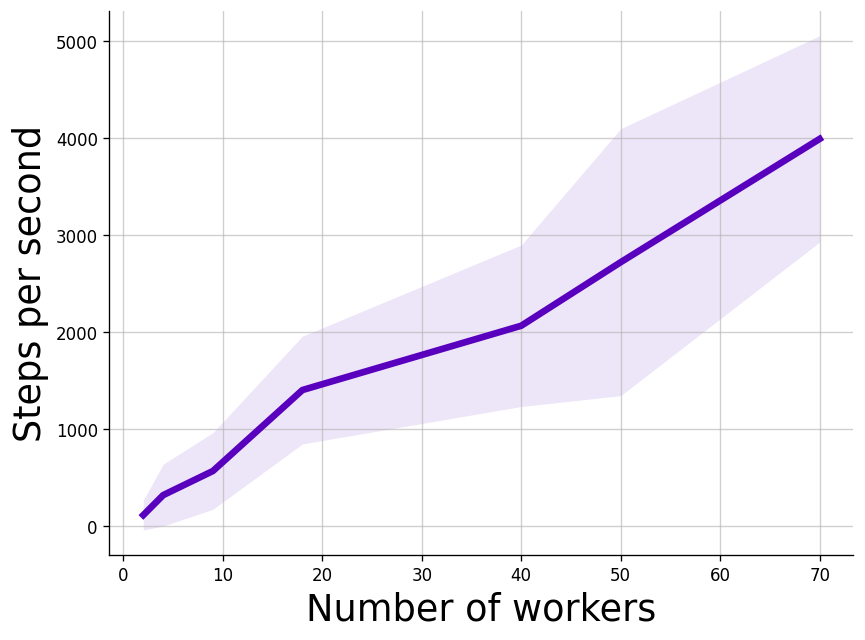}
    \caption{\small Speed of \mt{} training with respect to the number of distributed workers, with standard deviation shaded. The speed is measured as the average number of steps processed per second.}
    \label{fig:scaling}
\end{figure}

\paragraph{Input for the critic network.} We found that \mt{} (full) performs slightly worse than \mt{} (obs). Usually the differences are small. However, in two harder tasks, \textit{corridor} and \textit{6h\_vs\_8z}, \mt{} (full) learns much slower and often fails. This is perhaps surprising, as the full state contains additional information (e.g., about invisible opponents). To deepen the analysis, we ran \mt{} (obs+full), which uses both the observations and full state as the critic input. This improves the results, though they are still slightly inferior to \mt{} (obs); see Figure \ref{fig:critic_ablation} and Appendix \ref{section:all_training_data}. A more detailed discussion of this topic can be found in Appendix~\ref{sec:critic_comparison_appendix}. 

\begin{figure}[h!]
    \centering
    \includegraphics[width=\linewidth]{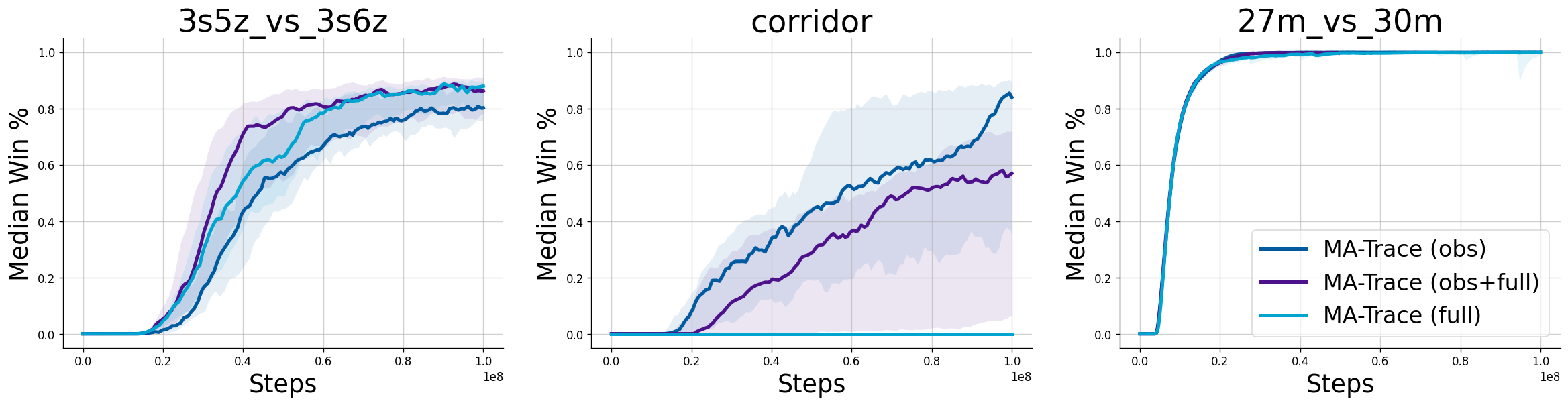}
    \caption{\small Comparison of using the full state \mt{} (full) and aggregated agents' observations \mt{} (obs) and both.}
    \label{fig:critic_ablation}
\end{figure}

\paragraph{Centralized vs decentralized.}
As noted by \cite{lyu2021contrasting}, centralized training in some cases may suffer from higher variance. Therefore we compared \mt{} with its decentralized version (i.e. having indpendent critics for each agent). The latter typically obtains weaker results and is less stable. See Figure \ref{fig:decentralized_ablation} and details in Appendix \ref{sec:decentralized_appendix}.

\begin{figure}[h!]
    \centering
    \includegraphics[width=\linewidth]{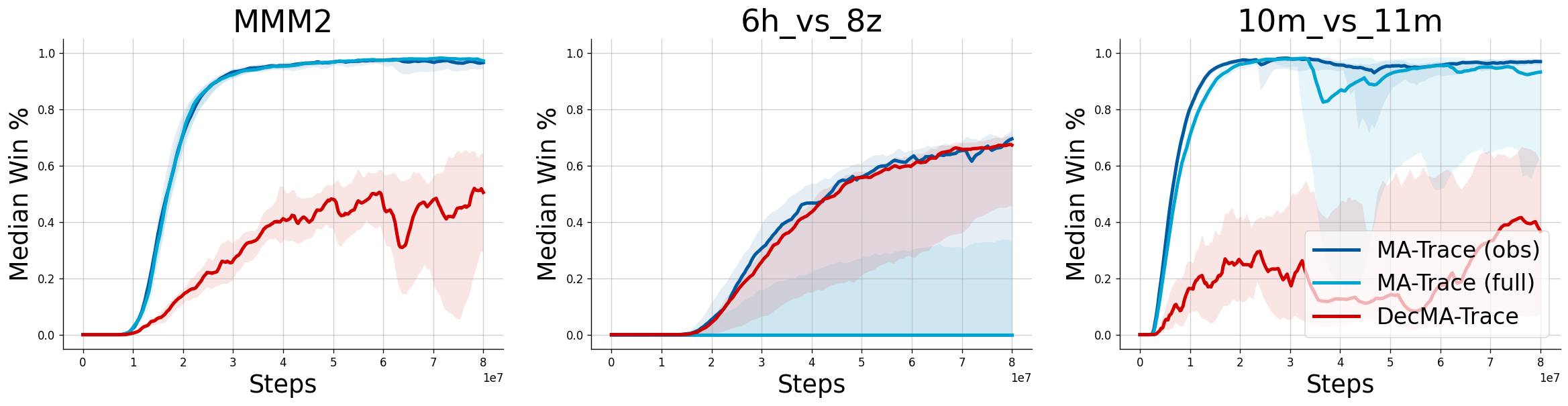}
    \caption{\small Performance of \mt{} during centralized and decentralized training.} 
    \label{fig:decentralized_ablation}
\end{figure}

\paragraph{Sharing actors networks.}
We follow a common approach of sharing the policy network between agents. In some works, e.g., \cite{DBLP:journals/corr/abs-2003-08839}, to preserve individuality, the observations are enriched with agent ID. This might be beneficial if agents should be assigned different roles within the team. However, we find these benefits rather minor and opt for input provided by the environment (i.e., without ID). See Figure \ref{fig:ID_ablation} and details in Appendix \ref{sec:id_experiments_appendix}.

One can also use separate networks for each agent. We check that \mt{} works considerably worse in such a case. In rare cases, using separate networks is advantageous, but only in the easiest tasks, e.g., \textit{3s5z}. See Figure \ref{fig:separate_ablation} and details in Appendix \ref{sec:separate_networks_exps}.

\begin{figure}[h!]
    \centering
    \includegraphics[width=\linewidth]{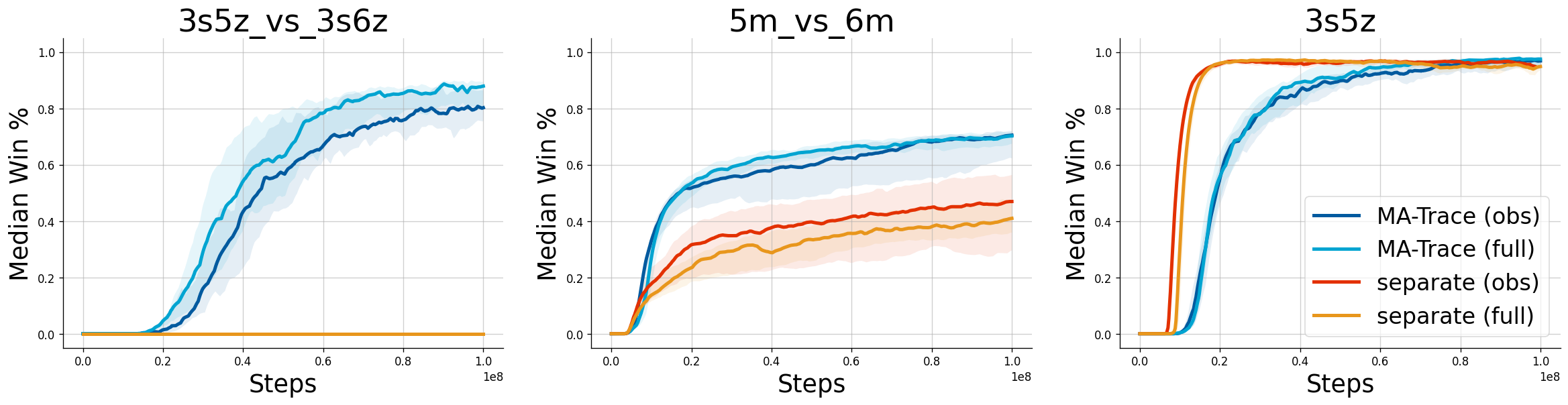}
    \caption{\small Performance of shared (standard \mt{}) and separate agents' networks.}
    \label{fig:separate_ablation}
\end{figure}

\begin{figure}[h!]
    \centering
    \includegraphics[width=\linewidth]{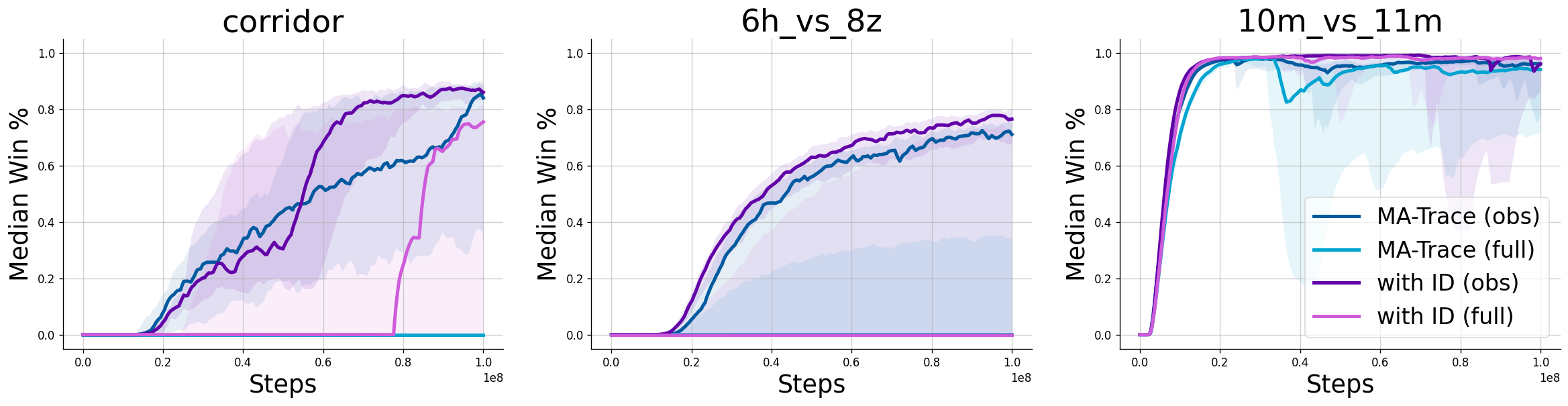}
    \caption{\small The impact of enriching observation with agent ID.}
    \label{fig:ID_ablation}
\end{figure}

\section{Limitations and further work}\label{sec:limitations}

We show that \mt{} successfully solves SMAC tasks. Further benchmarking is needed to underpin its quality. This includes testing on more environments, both fully cooperative (like SMAC) and competitive. The latter might require further algorithmic developments. 

The importance sampling weights successfully reduce distributional shifts arising in distributed training. An interesting question is whether they can also reduce the shifts introduced by the non-stationarity of the multi-agent environment.

\mt{} exhibits lower sample efficiency than the other methods we used for comparisons. This, at least partially, can be explained by its on-policy nature. Adapting the importance correction to accommodate more off-policy data would be an important achievement.

\section{Conclusions}\label{sec:conclusions}

In our work, we introduced \mt{}, a new multi-agent reinforcement learning algorithm. We evaluated it on $14$ scenarios constituting the StarCraft Multi-Agent Challenge and confirmed its strong performance. We also included ablations regarding importance sampling, centralization of learning, scaling, and sharing of parameters.
 
Thanks to the use of importance weights, \mt{} is highly scalable. Furthermore, the convergence properties of our algorithm highlighted by Theorem \ref{thm:fixed_point_theorem_abstract} show that it has not only experimental but also mathematical grounding. 

\begin{acks}
  We thank Konrad Staniszewski for discussions and experiments in the prequel of this project. 
  
  The work of Michał Zawalski was supported by the Polish National Science Center (NCN) grant UMO-2019/35/O/ST6/03464.  The work of
  Henryk Michalewski was supported by the Polish National Science Center
  grant UMO-2018/29/B/ST6/02959. 
  The work of Piotr Miłoś was supported by the NCN grant UMO-2017/26/E/ST6/00622. This research was supported by the PL-Grid Infrastructure. Some experiments were performed using the Entropy cluster funded by
  NVIDIA, Intel, the Polish National Science Center grant UMO-2017/26/E/ST6/00622 and  ERC Starting Grant TOTAL.  Our experiments were managed using \url{https://neptune.ai}. We would like to thank the Neptune team for providing access to the team version and technical support.
  We gratefully acknowledge Polish high-performance computing infrastructure PLGrid (HPC Centers: ACK Cyfronet AGH, PCSS) for providing computer facilities and support within computational grant no. PLG/2021/014561.
\end{acks}



\bibliographystyle{ACM-Reference-Format} 
\bibliography{bibliography}


\begin{thebibliography}{42}


\ifx \showCODEN    \undefined \def \showCODEN     #1{\unskip}     \fi
\ifx \showDOI      \undefined \def \showDOI       #1{#1}\fi
\ifx \showISBNx    \undefined \def \showISBNx     #1{\unskip}     \fi
\ifx \showISBNxiii \undefined \def \showISBNxiii  #1{\unskip}     \fi
\ifx \showISSN     \undefined \def \showISSN      #1{\unskip}     \fi
\ifx \showLCCN     \undefined \def \showLCCN      #1{\unskip}     \fi
\ifx \shownote     \undefined \def \shownote      #1{#1}          \fi
\ifx \showarticletitle \undefined \def \showarticletitle #1{#1}   \fi
\ifx \showURL      \undefined \def \showURL       {\relax}        \fi
\providecommand\bibfield[2]{#2}
\providecommand\bibinfo[2]{#2}
\providecommand\natexlab[1]{#1}
\providecommand\showeprint[2][]{arXiv:#2}

\bibitem[\protect\citeauthoryear{Achiam}{Achiam}{2018}]%
        {spinningup}
\bibfield{author}{\bibinfo{person}{Josh Achiam}.}
  \bibinfo{year}{2018}\natexlab{}.
\newblock \bibinfo{title}{Spinning Up in Deep RL}.
\newblock \bibinfo{howpublished}{\url{https://spinningup.openai.com}}.
\newblock


\bibitem[\protect\citeauthoryear{Andrychowicz, Baker, Chociej,
  J{\'{o}}zefowicz, McGrew, Pachocki, Petron, Plappert, Powell, Ray, Schneider,
  Sidor, Tobin, Welinder, Weng, and Zaremba}{Andrychowicz
  et~al\mbox{.}}{2020}]%
        {DBLP:journals/ijrr/OpenAI20}
\bibfield{author}{\bibinfo{person}{Marcin Andrychowicz}, \bibinfo{person}{Bowen
  Baker}, \bibinfo{person}{Maciek Chociej}, \bibinfo{person}{Rafal
  J{\'{o}}zefowicz}, \bibinfo{person}{Bob McGrew}, \bibinfo{person}{Jakub~W.
  Pachocki}, \bibinfo{person}{Arthur Petron}, \bibinfo{person}{Matthias
  Plappert}, \bibinfo{person}{Glenn Powell}, \bibinfo{person}{Alex Ray},
  \bibinfo{person}{Jonas Schneider}, \bibinfo{person}{Szymon Sidor},
  \bibinfo{person}{Josh Tobin}, \bibinfo{person}{Peter Welinder},
  \bibinfo{person}{Lilian Weng}, {and} \bibinfo{person}{Wojciech Zaremba}.}
  \bibinfo{year}{2020}\natexlab{}.
\newblock \showarticletitle{Learning dexterous in-hand manipulation}.
\newblock \bibinfo{journal}{\emph{Int. J. Robotics Res.}} \bibinfo{volume}{39},
  \bibinfo{number}{1} (\bibinfo{year}{2020}).
\newblock
\urldef\tempurl%
\url{https://doi.org/10.1177/0278364919887447}
\showDOI{\tempurl}


\bibitem[\protect\citeauthoryear{Bellemare, Dabney, and Munos}{Bellemare
  et~al\mbox{.}}{2017}]%
        {DBLP:conf/icml/BellemareDM17}
\bibfield{author}{\bibinfo{person}{Marc~G. Bellemare}, \bibinfo{person}{Will
  Dabney}, {and} \bibinfo{person}{R{\'{e}}mi Munos}.}
  \bibinfo{year}{2017}\natexlab{}.
\newblock \showarticletitle{A Distributional Perspective on Reinforcement
  Learning}. In \bibinfo{booktitle}{\emph{Proceedings of the 34th International
  Conference on Machine Learning, {ICML} 2017, Sydney, NSW, Australia, 6-11
  August 2017}} \emph{(\bibinfo{series}{Proceedings of Machine Learning
  Research})}, \bibfield{editor}{\bibinfo{person}{Doina Precup} {and}
  \bibinfo{person}{Yee~Whye Teh}} (Eds.), Vol.~\bibinfo{volume}{70}.
  \bibinfo{publisher}{{PMLR}}, \bibinfo{pages}{449--458}.
\newblock
\urldef\tempurl%
\url{http://proceedings.mlr.press/v70/bellemare17a.html}
\showURL{%
\tempurl}


\bibitem[\protect\citeauthoryear{Berner, Brockman, Chan, Cheung, Debiak,
  Dennison, Farhi, Fischer, Hashme, Hesse, J{\'{o}}zefowicz, Gray, Olsson,
  Pachocki, Petrov, de~Oliveira~Pinto, Raiman, Salimans, Schlatter, Schneider,
  Sidor, Sutskever, Tang, Wolski, and Zhang}{Berner et~al\mbox{.}}{2019}]%
        {openaifive}
\bibfield{author}{\bibinfo{person}{Christopher Berner}, \bibinfo{person}{Greg
  Brockman}, \bibinfo{person}{Brooke Chan}, \bibinfo{person}{Vicki Cheung},
  \bibinfo{person}{Przemyslaw Debiak}, \bibinfo{person}{Christy Dennison},
  \bibinfo{person}{David Farhi}, \bibinfo{person}{Quirin Fischer},
  \bibinfo{person}{Shariq Hashme}, \bibinfo{person}{Christopher Hesse},
  \bibinfo{person}{Rafal J{\'{o}}zefowicz}, \bibinfo{person}{Scott Gray},
  \bibinfo{person}{Catherine Olsson}, \bibinfo{person}{Jakub Pachocki},
  \bibinfo{person}{Michael Petrov}, \bibinfo{person}{Henrique~Pond{\'{e}} de
  Oliveira~Pinto}, \bibinfo{person}{Jonathan Raiman}, \bibinfo{person}{Tim
  Salimans}, \bibinfo{person}{Jeremy Schlatter}, \bibinfo{person}{Jonas
  Schneider}, \bibinfo{person}{Szymon Sidor}, \bibinfo{person}{Ilya Sutskever},
  \bibinfo{person}{Jie Tang}, \bibinfo{person}{Filip Wolski}, {and}
  \bibinfo{person}{Susan Zhang}.} \bibinfo{year}{2019}\natexlab{}.
\newblock \showarticletitle{Dota 2 with Large Scale Deep Reinforcement
  Learning}.
\newblock \bibinfo{journal}{\emph{CoRR}}  \bibinfo{volume}{abs/1912.06680}
  (\bibinfo{year}{2019}).
\newblock
\showeprint[arxiv]{1912.06680}
\urldef\tempurl%
\url{http://arxiv.org/abs/1912.06680}
\showURL{%
\tempurl}


\bibitem[\protect\citeauthoryear{Busoniu, Babuska, and De~Schutter}{Busoniu
  et~al\mbox{.}}{2008}]%
        {busoniu2008comprehensive}
\bibfield{author}{\bibinfo{person}{Lucian Busoniu}, \bibinfo{person}{Robert
  Babuska}, {and} \bibinfo{person}{Bart De~Schutter}.}
  \bibinfo{year}{2008}\natexlab{}.
\newblock \showarticletitle{A comprehensive survey of multiagent reinforcement
  learning}.
\newblock \bibinfo{journal}{\emph{IEEE Transactions on Systems, Man, and
  Cybernetics, Part C (Applications and Reviews)}} \bibinfo{volume}{38},
  \bibinfo{number}{2} (\bibinfo{year}{2008}), \bibinfo{pages}{156--172}.
\newblock


\bibitem[\protect\citeauthoryear{de~Witt, Peng, Kamienny, Torr, B{\"o}hmer, and
  Whiteson}{de~Witt et~al\mbox{.}}{2020}]%
        {de2020deep}
\bibfield{author}{\bibinfo{person}{Christian~Schroeder de Witt},
  \bibinfo{person}{Bei Peng}, \bibinfo{person}{Pierre-Alexandre Kamienny},
  \bibinfo{person}{Philip Torr}, \bibinfo{person}{Wendelin B{\"o}hmer}, {and}
  \bibinfo{person}{Shimon Whiteson}.} \bibinfo{year}{2020}\natexlab{}.
\newblock \showarticletitle{Deep multi-agent reinforcement learning for
  decentralized continuous cooperative control}.
\newblock \bibinfo{journal}{\emph{arXiv preprint arXiv:2003.06709}}
  (\bibinfo{year}{2020}).
\newblock


\bibitem[\protect\citeauthoryear{Espeholt, Marinier, Stanczyk, Wang, and
  Michalski}{Espeholt et~al\mbox{.}}{2020}]%
        {espeholt2019seed}
\bibfield{author}{\bibinfo{person}{Lasse Espeholt},
  \bibinfo{person}{Rapha{\"{e}}l Marinier}, \bibinfo{person}{Piotr Stanczyk},
  \bibinfo{person}{Ke Wang}, {and} \bibinfo{person}{Marcin Michalski}.}
  \bibinfo{year}{2020}\natexlab{}.
\newblock \showarticletitle{{SEED} {RL:} Scalable and Efficient Deep-RL with
  Accelerated Central Inference}. In \bibinfo{booktitle}{\emph{8th
  International Conference on Learning Representations, {ICLR} 2020, Addis
  Ababa, Ethiopia, April 26-30, 2020}}. \bibinfo{publisher}{OpenReview.net}.
\newblock
\urldef\tempurl%
\url{https://openreview.net/forum?id=rkgvXlrKwH}
\showURL{%
\tempurl}


\bibitem[\protect\citeauthoryear{Espeholt, Soyer, Munos, Simonyan, Mnih, Ward,
  Doron, Firoiu, Harley, Dunning, Legg, and Kavukcuoglu}{Espeholt
  et~al\mbox{.}}{2018}]%
        {DBLP:conf/icml/EspeholtSMSMWDF18}
\bibfield{author}{\bibinfo{person}{Lasse Espeholt}, \bibinfo{person}{Hubert
  Soyer}, \bibinfo{person}{R{\'{e}}mi Munos}, \bibinfo{person}{Karen Simonyan},
  \bibinfo{person}{Volodymyr Mnih}, \bibinfo{person}{Tom Ward},
  \bibinfo{person}{Yotam Doron}, \bibinfo{person}{Vlad Firoiu},
  \bibinfo{person}{Tim Harley}, \bibinfo{person}{Iain Dunning},
  \bibinfo{person}{Shane Legg}, {and} \bibinfo{person}{Koray Kavukcuoglu}.}
  \bibinfo{year}{2018}\natexlab{}.
\newblock \showarticletitle{{IMPALA:} Scalable Distributed Deep-RL with
  Importance Weighted Actor-Learner Architectures}. In
  \bibinfo{booktitle}{\emph{Proceedings of the 35th International Conference on
  Machine Learning, {ICML} 2018, Stockholmsm{\"{a}}ssan, Stockholm, Sweden,
  July 10-15, 2018}} \emph{(\bibinfo{series}{Proceedings of Machine Learning
  Research})}, \bibfield{editor}{\bibinfo{person}{Jennifer~G. Dy} {and}
  \bibinfo{person}{Andreas Krause}} (Eds.), Vol.~\bibinfo{volume}{80}.
  \bibinfo{publisher}{{PMLR}}, \bibinfo{pages}{1406--1415}.
\newblock
\urldef\tempurl%
\url{http://proceedings.mlr.press/v80/espeholt18a.html}
\showURL{%
\tempurl}


\bibitem[\protect\citeauthoryear{Foerster, Assael, de~Freitas, and
  Whiteson}{Foerster et~al\mbox{.}}{2016}]%
        {DBLP:conf/nips/FoersterAFW16}
\bibfield{author}{\bibinfo{person}{Jakob~N. Foerster},
  \bibinfo{person}{Yannis~M. Assael}, \bibinfo{person}{Nando de Freitas}, {and}
  \bibinfo{person}{Shimon Whiteson}.} \bibinfo{year}{2016}\natexlab{}.
\newblock \showarticletitle{Learning to Communicate with Deep Multi-Agent
  Reinforcement Learning}. In \bibinfo{booktitle}{\emph{Advances in Neural
  Information Processing Systems 29: Annual Conference on Neural Information
  Processing Systems 2016, December 5-10, 2016, Barcelona, Spain}},
  \bibfield{editor}{\bibinfo{person}{Daniel~D. Lee}, \bibinfo{person}{Masashi
  Sugiyama}, \bibinfo{person}{Ulrike von Luxburg}, \bibinfo{person}{Isabelle
  Guyon}, {and} \bibinfo{person}{Roman Garnett}} (Eds.).
  \bibinfo{pages}{2137--2145}.
\newblock
\urldef\tempurl%
\url{https://proceedings.neurips.cc/paper/2016/hash/c7635bfd99248a2cdef8249ef7bfbef4-Abstract.html}
\showURL{%
\tempurl}


\bibitem[\protect\citeauthoryear{Foerster, Farquhar, Afouras, Nardelli, and
  Whiteson}{Foerster et~al\mbox{.}}{2018}]%
        {6_DBLP:conf/aaai/FoersterFANW18}
\bibfield{author}{\bibinfo{person}{Jakob~N. Foerster}, \bibinfo{person}{Gregory
  Farquhar}, \bibinfo{person}{Triantafyllos Afouras}, \bibinfo{person}{Nantas
  Nardelli}, {and} \bibinfo{person}{Shimon Whiteson}.}
  \bibinfo{year}{2018}\natexlab{}.
\newblock \showarticletitle{Counterfactual Multi-Agent Policy Gradients}. In
  \bibinfo{booktitle}{\emph{Proceedings of the Thirty-Second {AAAI} Conference
  on Artificial Intelligence, (AAAI-18), the 30th innovative Applications of
  Artificial Intelligence (IAAI-18), and the 8th {AAAI} Symposium on
  Educational Advances in Artificial Intelligence (EAAI-18), New Orleans,
  Louisiana, USA, February 2-7, 2018}},
  \bibfield{editor}{\bibinfo{person}{Sheila~A. McIlraith} {and}
  \bibinfo{person}{Kilian~Q. Weinberger}} (Eds.). \bibinfo{publisher}{{AAAI}
  Press}, \bibinfo{pages}{2974--2982}.
\newblock
\urldef\tempurl%
\url{https://www.aaai.org/ocs/index.php/AAAI/AAAI18/paper/view/17193}
\showURL{%
\tempurl}


\bibitem[\protect\citeauthoryear{Foerster, Nardelli, Farquhar, Afouras, Torr,
  Kohli, and Whiteson}{Foerster et~al\mbox{.}}{2017}]%
        {DBLP:conf/icml/FoersterNFATKW17}
\bibfield{author}{\bibinfo{person}{Jakob~N. Foerster}, \bibinfo{person}{Nantas
  Nardelli}, \bibinfo{person}{Gregory Farquhar}, \bibinfo{person}{Triantafyllos
  Afouras}, \bibinfo{person}{Philip H.~S. Torr}, \bibinfo{person}{Pushmeet
  Kohli}, {and} \bibinfo{person}{Shimon Whiteson}.}
  \bibinfo{year}{2017}\natexlab{}.
\newblock \showarticletitle{Stabilising Experience Replay for Deep Multi-Agent
  Reinforcement Learning}. In \bibinfo{booktitle}{\emph{Proceedings of the 34th
  International Conference on Machine Learning, {ICML} 2017, Sydney, NSW,
  Australia, 6-11 August 2017}} \emph{(\bibinfo{series}{Proceedings of Machine
  Learning Research})}, \bibfield{editor}{\bibinfo{person}{Doina Precup} {and}
  \bibinfo{person}{Yee~Whye Teh}} (Eds.), Vol.~\bibinfo{volume}{70}.
  \bibinfo{publisher}{{PMLR}}, \bibinfo{pages}{1146--1155}.
\newblock
\urldef\tempurl%
\url{http://proceedings.mlr.press/v70/foerster17b.html}
\showURL{%
\tempurl}


\bibitem[\protect\citeauthoryear{Fujimoto, van Hoof, and Meger}{Fujimoto
  et~al\mbox{.}}{2018}]%
        {DBLP:conf/icml/FujimotoHM18}
\bibfield{author}{\bibinfo{person}{Scott Fujimoto}, \bibinfo{person}{Herke van
  Hoof}, {and} \bibinfo{person}{David Meger}.} \bibinfo{year}{2018}\natexlab{}.
\newblock \showarticletitle{Addressing Function Approximation Error in
  Actor-Critic Methods}. In \bibinfo{booktitle}{\emph{Proceedings of the 35th
  International Conference on Machine Learning, {ICML} 2018,
  Stockholmsm{\"{a}}ssan, Stockholm, Sweden, July 10-15, 2018}}
  \emph{(\bibinfo{series}{Proceedings of Machine Learning Research})},
  \bibfield{editor}{\bibinfo{person}{Jennifer~G. Dy} {and}
  \bibinfo{person}{Andreas Krause}} (Eds.), Vol.~\bibinfo{volume}{80}.
  \bibinfo{publisher}{{PMLR}}, \bibinfo{pages}{1582--1591}.
\newblock
\urldef\tempurl%
\url{http://proceedings.mlr.press/v80/fujimoto18a.html}
\showURL{%
\tempurl}


\bibitem[\protect\citeauthoryear{Haarnoja, Zhou, Abbeel, and Levine}{Haarnoja
  et~al\mbox{.}}{2018a}]%
        {DBLP:conf/icml/HaarnojaZAL18}
\bibfield{author}{\bibinfo{person}{Tuomas Haarnoja}, \bibinfo{person}{Aurick
  Zhou}, \bibinfo{person}{Pieter Abbeel}, {and} \bibinfo{person}{Sergey
  Levine}.} \bibinfo{year}{2018}\natexlab{a}.
\newblock \showarticletitle{Soft Actor-Critic: Off-Policy Maximum Entropy Deep
  Reinforcement Learning with a Stochastic Actor}. In
  \bibinfo{booktitle}{\emph{Proceedings of the 35th International Conference on
  Machine Learning, {ICML} 2018, Stockholmsm{\"{a}}ssan, Stockholm, Sweden,
  July 10-15, 2018}} \emph{(\bibinfo{series}{Proceedings of Machine Learning
  Research})}, \bibfield{editor}{\bibinfo{person}{Jennifer~G. Dy} {and}
  \bibinfo{person}{Andreas Krause}} (Eds.), Vol.~\bibinfo{volume}{80}.
  \bibinfo{publisher}{{PMLR}}, \bibinfo{pages}{1856--1865}.
\newblock
\urldef\tempurl%
\url{http://proceedings.mlr.press/v80/haarnoja18b.html}
\showURL{%
\tempurl}


\bibitem[\protect\citeauthoryear{Haarnoja, Zhou, Abbeel, and Levine}{Haarnoja
  et~al\mbox{.}}{2018b}]%
        {haarnoja2018soft}
\bibfield{author}{\bibinfo{person}{Tuomas Haarnoja}, \bibinfo{person}{Aurick
  Zhou}, \bibinfo{person}{Pieter Abbeel}, {and} \bibinfo{person}{Sergey
  Levine}.} \bibinfo{year}{2018}\natexlab{b}.
\newblock \showarticletitle{Soft actor-critic: Off-policy maximum entropy deep
  reinforcement learning with a stochastic actor}. In
  \bibinfo{booktitle}{\emph{International Conference on Machine Learning}}.
  PMLR, \bibinfo{pages}{1861--1870}.
\newblock


\bibitem[\protect\citeauthoryear{Hernandez{-}Leal, Kartal, and
  Taylor}{Hernandez{-}Leal et~al\mbox{.}}{2020}]%
        {1_DBLP:conf/atal/Hernandez-LealK20}
\bibfield{author}{\bibinfo{person}{Pablo Hernandez{-}Leal},
  \bibinfo{person}{Bilal Kartal}, {and} \bibinfo{person}{Matthew~E. Taylor}.}
  \bibinfo{year}{2020}\natexlab{}.
\newblock \showarticletitle{A Very Condensed Survey and Critique of Multiagent
  Deep Reinforcement Learning}. In \bibinfo{booktitle}{\emph{Proceedings of the
  19th International Conference on Autonomous Agents and Multiagent Systems,
  {AAMAS} '20, Auckland, New Zealand, May 9-13, 2020}},
  \bibfield{editor}{\bibinfo{person}{Amal El~Fallah Seghrouchni},
  \bibinfo{person}{Gita Sukthankar}, \bibinfo{person}{Bo~An}, {and}
  \bibinfo{person}{Neil Yorke{-}Smith}} (Eds.).
  \bibinfo{publisher}{International Foundation for Autonomous Agents and
  Multiagent Systems}, \bibinfo{pages}{2146--2148}.
\newblock
\urldef\tempurl%
\url{https://dl.acm.org/doi/abs/10.5555/3398761.3399105}
\showURL{%
\tempurl}


\bibitem[\protect\citeauthoryear{Kraemer and Banerjee}{Kraemer and
  Banerjee}{2016}]%
        {kraemer2016multi}
\bibfield{author}{\bibinfo{person}{Landon Kraemer} {and}
  \bibinfo{person}{Bikramjit Banerjee}.} \bibinfo{year}{2016}\natexlab{}.
\newblock \showarticletitle{Multi-agent reinforcement learning as a rehearsal
  for decentralized planning}.
\newblock \bibinfo{journal}{\emph{Neurocomputing}}  \bibinfo{volume}{190}
  (\bibinfo{year}{2016}), \bibinfo{pages}{82--94}.
\newblock


\bibitem[\protect\citeauthoryear{Lauer and Riedmiller}{Lauer and
  Riedmiller}{2000}]%
        {DBLP:conf/icml/LauerR00}
\bibfield{author}{\bibinfo{person}{Martin Lauer} {and}
  \bibinfo{person}{Martin~A. Riedmiller}.} \bibinfo{year}{2000}\natexlab{}.
\newblock \showarticletitle{An Algorithm for Distributed Reinforcement Learning
  in Cooperative Multi-Agent Systems}. In \bibinfo{booktitle}{\emph{Proceedings
  of the Seventeenth International Conference on Machine Learning {(ICML}
  2000), Stanford University, Stanford, CA, USA, June 29 - July 2, 2000}},
  \bibfield{editor}{\bibinfo{person}{Pat Langley}} (Ed.).
  \bibinfo{publisher}{Morgan Kaufmann}, \bibinfo{pages}{535--542}.
\newblock


\bibitem[\protect\citeauthoryear{Lillicrap, Hunt, Pritzel, Heess, Erez, Tassa,
  Silver, and Wierstra}{Lillicrap et~al\mbox{.}}{2016}]%
        {lillicrap2019continuous}
\bibfield{author}{\bibinfo{person}{Timothy~P. Lillicrap},
  \bibinfo{person}{Jonathan~J. Hunt}, \bibinfo{person}{Alexander Pritzel},
  \bibinfo{person}{Nicolas Heess}, \bibinfo{person}{Tom Erez},
  \bibinfo{person}{Yuval Tassa}, \bibinfo{person}{David Silver}, {and}
  \bibinfo{person}{Daan Wierstra}.} \bibinfo{year}{2016}\natexlab{}.
\newblock \showarticletitle{Continuous control with deep reinforcement
  learning}. In \bibinfo{booktitle}{\emph{4th International Conference on
  Learning Representations, {ICLR} 2016, San Juan, Puerto Rico, May 2-4, 2016,
  Conference Track Proceedings}}, \bibfield{editor}{\bibinfo{person}{Yoshua
  Bengio} {and} \bibinfo{person}{Yann LeCun}} (Eds.).
\newblock
\urldef\tempurl%
\url{http://arxiv.org/abs/1509.02971}
\showURL{%
\tempurl}


\bibitem[\protect\citeauthoryear{Lowe, Wu, Tamar, Harb, Abbeel, and
  Mordatch}{Lowe et~al\mbox{.}}{2017}]%
        {DBLP:conf/nips/LoweWTHAM17}
\bibfield{author}{\bibinfo{person}{Ryan Lowe}, \bibinfo{person}{Yi Wu},
  \bibinfo{person}{Aviv Tamar}, \bibinfo{person}{Jean Harb},
  \bibinfo{person}{Pieter Abbeel}, {and} \bibinfo{person}{Igor Mordatch}.}
  \bibinfo{year}{2017}\natexlab{}.
\newblock \showarticletitle{Multi-Agent Actor-Critic for Mixed
  Cooperative-Competitive Environments}. In \bibinfo{booktitle}{\emph{Advances
  in Neural Information Processing Systems 30: Annual Conference on Neural
  Information Processing Systems 2017, December 4-9, 2017, Long Beach, CA,
  {USA}}}, \bibfield{editor}{\bibinfo{person}{Isabelle Guyon},
  \bibinfo{person}{Ulrike von Luxburg}, \bibinfo{person}{Samy Bengio},
  \bibinfo{person}{Hanna~M. Wallach}, \bibinfo{person}{Rob Fergus},
  \bibinfo{person}{S.~V.~N. Vishwanathan}, {and} \bibinfo{person}{Roman
  Garnett}} (Eds.). \bibinfo{pages}{6379--6390}.
\newblock
\urldef\tempurl%
\url{https://proceedings.neurips.cc/paper/2017/hash/68a9750337a418a86fe06c1991a1d64c-Abstract.html}
\showURL{%
\tempurl}


\bibitem[\protect\citeauthoryear{Lyu, Xiao, Daley, and Amato}{Lyu
  et~al\mbox{.}}{2021}]%
        {lyu2021contrasting}
\bibfield{author}{\bibinfo{person}{Xueguang Lyu}, \bibinfo{person}{Yuchen
  Xiao}, \bibinfo{person}{Brett Daley}, {and} \bibinfo{person}{Christopher
  Amato}.} \bibinfo{year}{2021}\natexlab{}.
\newblock \showarticletitle{Contrasting Centralized and Decentralized Critics
  in Multi-Agent Reinforcement Learning}.
\newblock \bibinfo{journal}{\emph{CoRR}}  \bibinfo{volume}{arXiv/2102.04402}
  (\bibinfo{year}{2021}).
\newblock
\showeprint[arxiv]{2102.04402}
\urldef\tempurl%
\url{https://arxiv.org/abs/2102.04402}
\showURL{%
\tempurl}


\bibitem[\protect\citeauthoryear{Mahajan, Rashid, Samvelyan, and
  Whiteson}{Mahajan et~al\mbox{.}}{2019}]%
        {DBLP:conf/nips/MahajanRSW19}
\bibfield{author}{\bibinfo{person}{Anuj Mahajan}, \bibinfo{person}{Tabish
  Rashid}, \bibinfo{person}{Mikayel Samvelyan}, {and} \bibinfo{person}{Shimon
  Whiteson}.} \bibinfo{year}{2019}\natexlab{}.
\newblock \showarticletitle{{MAVEN:} Multi-Agent Variational Exploration}. In
  \bibinfo{booktitle}{\emph{Advances in Neural Information Processing Systems
  32: Annual Conference on Neural Information Processing Systems 2019, NeurIPS
  2019, December 8-14, 2019, Vancouver, BC, Canada}},
  \bibfield{editor}{\bibinfo{person}{Hanna~M. Wallach}, \bibinfo{person}{Hugo
  Larochelle}, \bibinfo{person}{Alina Beygelzimer}, \bibinfo{person}{Florence
  d'Alch{\'{e}}{-}Buc}, \bibinfo{person}{Emily~B. Fox}, {and}
  \bibinfo{person}{Roman Garnett}} (Eds.). \bibinfo{pages}{7611--7622}.
\newblock
\urldef\tempurl%
\url{https://proceedings.neurips.cc/paper/2019/hash/f816dc0acface7498e10496222e9db10-Abstract.html}
\showURL{%
\tempurl}


\bibitem[\protect\citeauthoryear{Munos, Stepleton, Harutyunyan, and
  Bellemare}{Munos et~al\mbox{.}}{2016}]%
        {munos2016safe}
\bibfield{author}{\bibinfo{person}{R{\'{e}}mi Munos}, \bibinfo{person}{Tom
  Stepleton}, \bibinfo{person}{Anna Harutyunyan}, {and}
  \bibinfo{person}{Marc~G. Bellemare}.} \bibinfo{year}{2016}\natexlab{}.
\newblock \showarticletitle{Safe and Efficient Off-Policy Reinforcement
  Learning}. In \bibinfo{booktitle}{\emph{Advances in Neural Information
  Processing Systems 29: Annual Conference on Neural Information Processing
  Systems 2016, December 5-10, 2016, Barcelona, Spain}},
  \bibfield{editor}{\bibinfo{person}{Daniel~D. Lee}, \bibinfo{person}{Masashi
  Sugiyama}, \bibinfo{person}{Ulrike von Luxburg}, \bibinfo{person}{Isabelle
  Guyon}, {and} \bibinfo{person}{Roman Garnett}} (Eds.).
  \bibinfo{pages}{1046--1054}.
\newblock
\urldef\tempurl%
\url{https://proceedings.neurips.cc/paper/2016/hash/c3992e9a68c5ae12bd18488bc579b30d-Abstract.html}
\showURL{%
\tempurl}


\bibitem[\protect\citeauthoryear{Oliehoek and Amato}{Oliehoek and
  Amato}{2016}]%
        {DBLP:series/sbis/OliehoekA16}
\bibfield{author}{\bibinfo{person}{Frans~A. Oliehoek} {and}
  \bibinfo{person}{Christopher Amato}.} \bibinfo{year}{2016}\natexlab{}.
\newblock \bibinfo{booktitle}{\emph{A Concise Introduction to Decentralized
  POMDPs}}.
\newblock \bibinfo{publisher}{Springer}.
\newblock
\showISBNx{978-3-319-28927-4}
\urldef\tempurl%
\url{https://doi.org/10.1007/978-3-319-28929-8}
\showDOI{\tempurl}


\bibitem[\protect\citeauthoryear{Omidshafiei, Pazis, Amato, How, and
  Vian}{Omidshafiei et~al\mbox{.}}{2017}]%
        {omidshafiei2017deep}
\bibfield{author}{\bibinfo{person}{Shayegan Omidshafiei},
  \bibinfo{person}{Jason Pazis}, \bibinfo{person}{Christopher Amato},
  \bibinfo{person}{Jonathan~P How}, {and} \bibinfo{person}{John Vian}.}
  \bibinfo{year}{2017}\natexlab{}.
\newblock \showarticletitle{Deep decentralized multi-task multi-agent
  reinforcement learning under partial observability}. In
  \bibinfo{booktitle}{\emph{International Conference on Machine Learning}}.
  PMLR, \bibinfo{pages}{2681--2690}.
\newblock


\bibitem[\protect\citeauthoryear{Rashid, Samvelyan, de~Witt, Farquhar,
  Foerster, and Whiteson}{Rashid et~al\mbox{.}}{2018}]%
        {4_DBLP:conf/icml/RashidSWFFW18}
\bibfield{author}{\bibinfo{person}{Tabish Rashid}, \bibinfo{person}{Mikayel
  Samvelyan}, \bibinfo{person}{Christian~Schr{\"{o}}der de Witt},
  \bibinfo{person}{Gregory Farquhar}, \bibinfo{person}{Jakob~N. Foerster},
  {and} \bibinfo{person}{Shimon Whiteson}.} \bibinfo{year}{2018}\natexlab{}.
\newblock \showarticletitle{{QMIX:} Monotonic Value Function Factorisation for
  Deep Multi-Agent Reinforcement Learning}. In
  \bibinfo{booktitle}{\emph{Proceedings of the 35th International Conference on
  Machine Learning, {ICML} 2018, Stockholmsm{\"{a}}ssan, Stockholm, Sweden,
  July 10-15, 2018}} \emph{(\bibinfo{series}{Proceedings of Machine Learning
  Research})}, \bibfield{editor}{\bibinfo{person}{Jennifer~G. Dy} {and}
  \bibinfo{person}{Andreas Krause}} (Eds.), Vol.~\bibinfo{volume}{80}.
  \bibinfo{publisher}{{PMLR}}, \bibinfo{pages}{4292--4301}.
\newblock
\urldef\tempurl%
\url{http://proceedings.mlr.press/v80/rashid18a.html}
\showURL{%
\tempurl}


\bibitem[\protect\citeauthoryear{Rashid, Samvelyan, de~Witt, Farquhar,
  Foerster, and Whiteson}{Rashid et~al\mbox{.}}{2020}]%
        {DBLP:journals/corr/abs-2003-08839}
\bibfield{author}{\bibinfo{person}{Tabish Rashid}, \bibinfo{person}{Mikayel
  Samvelyan}, \bibinfo{person}{Christian~Schr{\"{o}}der de Witt},
  \bibinfo{person}{Gregory Farquhar}, \bibinfo{person}{Jakob~N. Foerster},
  {and} \bibinfo{person}{Shimon Whiteson}.} \bibinfo{year}{2020}\natexlab{}.
\newblock \showarticletitle{Monotonic Value Function Factorisation for Deep
  Multi-Agent Reinforcement Learning}.
\newblock \bibinfo{journal}{\emph{CoRR}}  \bibinfo{volume}{abs/2003.08839}
  (\bibinfo{year}{2020}).
\newblock
\showeprint[arxiv]{2003.08839}
\urldef\tempurl%
\url{https://arxiv.org/abs/2003.08839}
\showURL{%
\tempurl}


\bibitem[\protect\citeauthoryear{Samvelyan, Rashid, de~Witt, Farquhar,
  Nardelli, Rudner, Hung, Torr, Foerster, and Whiteson}{Samvelyan
  et~al\mbox{.}}{2019a}]%
        {DBLP:conf/atal/SamvelyanRWFNRH19}
\bibfield{author}{\bibinfo{person}{Mikayel Samvelyan}, \bibinfo{person}{Tabish
  Rashid}, \bibinfo{person}{Christian~Schr{\"{o}}der de Witt},
  \bibinfo{person}{Gregory Farquhar}, \bibinfo{person}{Nantas Nardelli},
  \bibinfo{person}{Tim G.~J. Rudner}, \bibinfo{person}{Chia{-}Man Hung},
  \bibinfo{person}{Philip H.~S. Torr}, \bibinfo{person}{Jakob~N. Foerster},
  {and} \bibinfo{person}{Shimon Whiteson}.} \bibinfo{year}{2019}\natexlab{a}.
\newblock \showarticletitle{The StarCraft Multi-Agent Challenge}. In
  \bibinfo{booktitle}{\emph{Proceedings of the 18th International Conference on
  Autonomous Agents and MultiAgent Systems, {AAMAS} '19, Montreal, QC, Canada,
  May 13-17, 2019}}, \bibfield{editor}{\bibinfo{person}{Edith Elkind},
  \bibinfo{person}{Manuela Veloso}, \bibinfo{person}{Noa Agmon}, {and}
  \bibinfo{person}{Matthew~E. Taylor}} (Eds.).
  \bibinfo{publisher}{International Foundation for Autonomous Agents and
  Multiagent Systems}, \bibinfo{pages}{2186--2188}.
\newblock
\urldef\tempurl%
\url{http://dl.acm.org/citation.cfm?id=3332052}
\showURL{%
\tempurl}


\bibitem[\protect\citeauthoryear{Samvelyan, Rashid, de~Witt, Farquhar,
  Nardelli, Rudner, Hung, Torr, Foerster, and Whiteson}{Samvelyan
  et~al\mbox{.}}{2019b}]%
        {samvelyan19smac}
\bibfield{author}{\bibinfo{person}{Mikayel Samvelyan}, \bibinfo{person}{Tabish
  Rashid}, \bibinfo{person}{Christian~Schroeder de Witt},
  \bibinfo{person}{Gregory Farquhar}, \bibinfo{person}{Nantas Nardelli},
  \bibinfo{person}{Tim G.~J. Rudner}, \bibinfo{person}{Chia-Man Hung},
  \bibinfo{person}{Philiph H.~S. Torr}, \bibinfo{person}{Jakob Foerster}, {and}
  \bibinfo{person}{Shimon Whiteson}.} \bibinfo{year}{2019}\natexlab{b}.
\newblock \showarticletitle{{The} {StarCraft} {Multi}-{Agent} {Challenge}}.
\newblock \bibinfo{journal}{\emph{CoRR}}  \bibinfo{volume}{abs/1902.04043}
  (\bibinfo{year}{2019}).
\newblock


\bibitem[\protect\citeauthoryear{Schulman, Wolski, Dhariwal, Radford, and
  Klimov}{Schulman et~al\mbox{.}}{2017}]%
        {DBLP:journals/corr/SchulmanWDRK17}
\bibfield{author}{\bibinfo{person}{John Schulman}, \bibinfo{person}{Filip
  Wolski}, \bibinfo{person}{Prafulla Dhariwal}, \bibinfo{person}{Alec Radford},
  {and} \bibinfo{person}{Oleg Klimov}.} \bibinfo{year}{2017}\natexlab{}.
\newblock \showarticletitle{Proximal Policy Optimization Algorithms}.
\newblock \bibinfo{journal}{\emph{CoRR}}  \bibinfo{volume}{abs/1707.06347}
  (\bibinfo{year}{2017}).
\newblock
\showeprint[arxiv]{1707.06347}
\urldef\tempurl%
\url{http://arxiv.org/abs/1707.06347}
\showURL{%
\tempurl}


\bibitem[\protect\citeauthoryear{Silver, Huang, Maddison, Guez, Sifre,
  Driessche, Schrittwieser, Antonoglou, Panneershelvam, Lanctot, Dieleman,
  Grewe, Nham, Kalchbrenner, Sutskever, Lillicrap, Leach, Kavukcuoglu, Graepel,
  and Hassabis}{Silver et~al\mbox{.}}{2016}]%
        {alphago}
\bibfield{author}{\bibinfo{person}{David Silver}, \bibinfo{person}{Aja Huang},
  \bibinfo{person}{Christopher Maddison}, \bibinfo{person}{Arthur Guez},
  \bibinfo{person}{Laurent Sifre}, \bibinfo{person}{George Driessche},
  \bibinfo{person}{Julian Schrittwieser}, \bibinfo{person}{Ioannis Antonoglou},
  \bibinfo{person}{Veda Panneershelvam}, \bibinfo{person}{Marc Lanctot},
  \bibinfo{person}{Sander Dieleman}, \bibinfo{person}{Dominik Grewe},
  \bibinfo{person}{John Nham}, \bibinfo{person}{Nal Kalchbrenner},
  \bibinfo{person}{Ilya Sutskever}, \bibinfo{person}{Timothy Lillicrap},
  \bibinfo{person}{Madeleine Leach}, \bibinfo{person}{Koray Kavukcuoglu},
  \bibinfo{person}{Thore Graepel}, {and} \bibinfo{person}{Demis Hassabis}.}
  \bibinfo{year}{2016}\natexlab{}.
\newblock \showarticletitle{Mastering the game of Go with deep neural networks
  and tree search}.
\newblock \bibinfo{journal}{\emph{Nature}}  \bibinfo{volume}{529}
  (\bibinfo{date}{01} \bibinfo{year}{2016}), \bibinfo{pages}{484--489}.
\newblock
\urldef\tempurl%
\url{https://doi.org/10.1038/nature16961}
\showDOI{\tempurl}


\bibitem[\protect\citeauthoryear{Son, Kim, Kang, Hostallero, and Yi}{Son
  et~al\mbox{.}}{2019}]%
        {DBLP:conf/icml/SonKKHY19}
\bibfield{author}{\bibinfo{person}{Kyunghwan Son}, \bibinfo{person}{Daewoo
  Kim}, \bibinfo{person}{Wan~Ju Kang}, \bibinfo{person}{David Hostallero},
  {and} \bibinfo{person}{Yung Yi}.} \bibinfo{year}{2019}\natexlab{}.
\newblock \showarticletitle{{QTRAN:} Learning to Factorize with Transformation
  for Cooperative Multi-Agent Reinforcement Learning}. In
  \bibinfo{booktitle}{\emph{Proceedings of the 36th International Conference on
  Machine Learning, {ICML} 2019, 9-15 June 2019, Long Beach, California,
  {USA}}} \emph{(\bibinfo{series}{Proceedings of Machine Learning Research})},
  \bibfield{editor}{\bibinfo{person}{Kamalika Chaudhuri} {and}
  \bibinfo{person}{Ruslan Salakhutdinov}} (Eds.), Vol.~\bibinfo{volume}{97}.
  \bibinfo{publisher}{{PMLR}}, \bibinfo{pages}{5887--5896}.
\newblock
\urldef\tempurl%
\url{http://proceedings.mlr.press/v97/son19a.html}
\showURL{%
\tempurl}


\bibitem[\protect\citeauthoryear{Sunehag, Lever, Gruslys, Czarnecki, Zambaldi,
  Jaderberg, Lanctot, Sonnerat, Leibo, Tuyls, and Graepel}{Sunehag
  et~al\mbox{.}}{2017}]%
        {sunehag2017valuedecomposition}
\bibfield{author}{\bibinfo{person}{Peter Sunehag}, \bibinfo{person}{Guy Lever},
  \bibinfo{person}{Audrunas Gruslys}, \bibinfo{person}{Wojciech~Marian
  Czarnecki}, \bibinfo{person}{Vinicius Zambaldi}, \bibinfo{person}{Max
  Jaderberg}, \bibinfo{person}{Marc Lanctot}, \bibinfo{person}{Nicolas
  Sonnerat}, \bibinfo{person}{Joel~Z. Leibo}, \bibinfo{person}{Karl Tuyls},
  {and} \bibinfo{person}{Thore Graepel}.} \bibinfo{year}{2017}\natexlab{}.
\newblock \bibinfo{title}{Value-Decomposition Networks For Cooperative
  Multi-Agent Learning}.
\newblock
\newblock
\showeprint[arxiv]{cs.AI/1706.05296}


\bibitem[\protect\citeauthoryear{Sutton and Barto}{Sutton and Barto}{2018}]%
        {sutton2018reinforcement}
\bibfield{author}{\bibinfo{person}{Richard~S Sutton} {and}
  \bibinfo{person}{Andrew~G Barto}.} \bibinfo{year}{2018}\natexlab{}.
\newblock \bibinfo{booktitle}{\emph{Reinforcement learning: An introduction}}.
\newblock \bibinfo{publisher}{MIT press}.
\newblock


\bibitem[\protect\citeauthoryear{Tan}{Tan}{1993}]%
        {DBLP:conf/icml/Tan93}
\bibfield{author}{\bibinfo{person}{Ming Tan}.} \bibinfo{year}{1993}\natexlab{}.
\newblock \showarticletitle{Multi-Agent Reinforcement Learning: Independent
  versus Cooperative Agents}. In \bibinfo{booktitle}{\emph{Machine Learning,
  Proceedings of the Tenth International Conference, University of
  Massachusetts, Amherst, MA, USA, June 27-29, 1993}},
  \bibfield{editor}{\bibinfo{person}{Paul~E. Utgoff}} (Ed.).
  \bibinfo{publisher}{Morgan Kaufmann}, \bibinfo{pages}{330--337}.
\newblock
\urldef\tempurl%
\url{https://doi.org/10.1016/b978-1-55860-307-3.50049-6}
\showDOI{\tempurl}


\bibitem[\protect\citeauthoryear{Tsitsiklis and Roy}{Tsitsiklis and
  Roy}{1997}]%
        {DBLP:journals/tac/TsitsiklisR97}
\bibfield{author}{\bibinfo{person}{John~N. Tsitsiklis} {and}
  \bibinfo{person}{Benjamin~Van Roy}.} \bibinfo{year}{1997}\natexlab{}.
\newblock \showarticletitle{An analysis of temporal-difference learning with
  function approximation}.
\newblock \bibinfo{journal}{\emph{{IEEE} Trans. Autom. Control.}}
  \bibinfo{volume}{42}, \bibinfo{number}{5} (\bibinfo{year}{1997}),
  \bibinfo{pages}{674--690}.
\newblock
\urldef\tempurl%
\url{https://doi.org/10.1109/9.580874}
\showDOI{\tempurl}


\bibitem[\protect\citeauthoryear{Vinyals, Babuschkin, Czarnecki, Mathieu,
  Dudzik, Chung, Choi, Powell, Ewalds, Georgiev, Oh, Horgan, Kroiss, Danihelka,
  Huang, Sifre, Cai, Agapiou, Jaderberg, Vezhnevets, Leblond, Pohlen, Dalibard,
  Budden, Sulsky, Molloy, Paine, Gulcehre, Wang, Pfaff, Wu, Ring, Yogatama,
  W{\"u}nsch, McKinney, Smith, Schaul, Lillicrap, Kavukcuoglu, Hassabis, Apps,
  and Silver}{Vinyals et~al\mbox{.}}{2019b}]%
        {alphastar}
\bibfield{author}{\bibinfo{person}{Oriol Vinyals}, \bibinfo{person}{I.
  Babuschkin}, \bibinfo{person}{Wojciech Czarnecki},
  \bibinfo{person}{Micha{\"e}l Mathieu}, \bibinfo{person}{Andrew Dudzik},
  \bibinfo{person}{J. Chung}, \bibinfo{person}{D. Choi},
  \bibinfo{person}{Richard Powell}, \bibinfo{person}{Timo Ewalds},
  \bibinfo{person}{P. Georgiev}, \bibinfo{person}{Junhyuk Oh},
  \bibinfo{person}{Dan Horgan}, \bibinfo{person}{M. Kroiss},
  \bibinfo{person}{Ivo Danihelka}, \bibinfo{person}{Aja Huang},
  \bibinfo{person}{L. Sifre}, \bibinfo{person}{Trevor Cai}, \bibinfo{person}{J.
  Agapiou}, \bibinfo{person}{Max Jaderberg}, \bibinfo{person}{A. Vezhnevets},
  \bibinfo{person}{R{\'e}mi Leblond}, \bibinfo{person}{Tobias Pohlen},
  \bibinfo{person}{Valentin Dalibard}, \bibinfo{person}{D. Budden},
  \bibinfo{person}{Yury Sulsky}, \bibinfo{person}{James Molloy},
  \bibinfo{person}{T. Paine}, \bibinfo{person}{Caglar Gulcehre},
  \bibinfo{person}{Ziyu Wang}, \bibinfo{person}{T. Pfaff},
  \bibinfo{person}{Yuhuai Wu}, \bibinfo{person}{Roman Ring},
  \bibinfo{person}{Dani Yogatama}, \bibinfo{person}{Dario W{\"u}nsch},
  \bibinfo{person}{Katrina McKinney}, \bibinfo{person}{Oliver Smith},
  \bibinfo{person}{Tom Schaul}, \bibinfo{person}{T. Lillicrap},
  \bibinfo{person}{K. Kavukcuoglu}, \bibinfo{person}{D. Hassabis},
  \bibinfo{person}{C. Apps}, {and} \bibinfo{person}{D. Silver}.}
  \bibinfo{year}{2019}\natexlab{b}.
\newblock \showarticletitle{Grandmaster level in StarCraft II using multi-agent
  reinforcement learning}.
\newblock \bibinfo{journal}{\emph{Nature}} (\bibinfo{year}{2019}),
  \bibinfo{pages}{1--5}.
\newblock


\bibitem[\protect\citeauthoryear{Vinyals, Babuschkin, Czarnecki, Mathieu,
  Dudzik, Chung, Choi, Powell, Ewalds, Georgiev, et~al\mbox{.}}{Vinyals
  et~al\mbox{.}}{2019a}]%
        {vinyals2019grandmaster}
\bibfield{author}{\bibinfo{person}{Oriol Vinyals}, \bibinfo{person}{Igor
  Babuschkin}, \bibinfo{person}{Wojciech~M Czarnecki},
  \bibinfo{person}{Micha{\"e}l Mathieu}, \bibinfo{person}{Andrew Dudzik},
  \bibinfo{person}{Junyoung Chung}, \bibinfo{person}{David~H Choi},
  \bibinfo{person}{Richard Powell}, \bibinfo{person}{Timo Ewalds},
  \bibinfo{person}{Petko Georgiev}, {et~al\mbox{.}}}
  \bibinfo{year}{2019}\natexlab{a}.
\newblock \showarticletitle{Grandmaster level in StarCraft II using multi-agent
  reinforcement learning}.
\newblock \bibinfo{journal}{\emph{Nature}} \bibinfo{volume}{575},
  \bibinfo{number}{7782} (\bibinfo{year}{2019}), \bibinfo{pages}{350--354}.
\newblock


\bibitem[\protect\citeauthoryear{Wang, Han, Wang, Dong, and Zhang}{Wang
  et~al\mbox{.}}{2020}]%
        {DBLP:journals/corr/abs-2007-12322}
\bibfield{author}{\bibinfo{person}{Yihan Wang}, \bibinfo{person}{Beining Han},
  \bibinfo{person}{Tonghan Wang}, \bibinfo{person}{Heng Dong}, {and}
  \bibinfo{person}{Chongjie Zhang}.} \bibinfo{year}{2020}\natexlab{}.
\newblock \showarticletitle{Off-Policy Multi-Agent Decomposed Policy
  Gradients}.
\newblock \bibinfo{journal}{\emph{CoRR}}  \bibinfo{volume}{abs/2007.12322}
  (\bibinfo{year}{2020}).
\newblock
\showeprint[arXiv]{2007.12322}
\urldef\tempurl%
\url{https://arxiv.org/abs/2007.12322}
\showURL{%
\tempurl}


\bibitem[\protect\citeauthoryear{Wen, Yao, Wang, and Tan}{Wen
  et~al\mbox{.}}{2020}]%
        {DBLP:conf/aaai/WenYWT20}
\bibfield{author}{\bibinfo{person}{Chao Wen}, \bibinfo{person}{Xinghu Yao},
  \bibinfo{person}{Yuhui Wang}, {and} \bibinfo{person}{Xiaoyang Tan}.}
  \bibinfo{year}{2020}\natexlab{}.
\newblock \showarticletitle{SMIX({\(\lambda\)}): Enhancing Centralized Value
  Functions for Cooperative Multi-Agent Reinforcement Learning}. In
  \bibinfo{booktitle}{\emph{The Thirty-Fourth {AAAI} Conference on Artificial
  Intelligence, {AAAI} 2020, The Thirty-Second Innovative Applications of
  Artificial Intelligence Conference, {IAAI} 2020, The Tenth {AAAI} Symposium
  on Educational Advances in Artificial Intelligence, {EAAI} 2020, New York,
  NY, USA, February 7-12, 2020}}. \bibinfo{publisher}{{AAAI} Press},
  \bibinfo{pages}{7301--7308}.
\newblock
\urldef\tempurl%
\url{https://aaai.org/ojs/index.php/AAAI/article/view/6223}
\showURL{%
\tempurl}


\bibitem[\protect\citeauthoryear{Yu, Velu, Vinitsky, Wang, Bayen, and Wu}{Yu
  et~al\mbox{.}}{2020}]%
        {yu2021benchmarking}
\bibfield{author}{\bibinfo{person}{Chao Yu}, \bibinfo{person}{Akash Velu},
  \bibinfo{person}{Eugene Vinitsky}, \bibinfo{person}{Yu Wang},
  \bibinfo{person}{Alexandre Bayen}, {and} \bibinfo{person}{Yi Wu}.}
  \bibinfo{year}{2020}\natexlab{}.
\newblock \bibinfo{title}{Benchmarking Multi-agent Deep Reinforcement Learning
  Algorithms}.
\newblock \bibinfo{howpublished}{Workshop in Conference on Neural Information
  Processing Systems 2020}.
\newblock
\urldef\tempurl%
\url{https://www.researchgate.net/publication/349943157_Benchmarking_Multi-agent_Deep_Reinforcement_Learning_Algorithms}
\showURL{%
\tempurl}


\bibitem[\protect\citeauthoryear{Yu, Velu, Vinitsky, Wang, Bayen, and Wu}{Yu
  et~al\mbox{.}}{2021}]%
        {DBLP:journals/corr/abs-2103-01955}
\bibfield{author}{\bibinfo{person}{Chao Yu}, \bibinfo{person}{Akash Velu},
  \bibinfo{person}{Eugene Vinitsky}, \bibinfo{person}{Yu Wang},
  \bibinfo{person}{Alexandre~M. Bayen}, {and} \bibinfo{person}{Yi Wu}.}
  \bibinfo{year}{2021}\natexlab{}.
\newblock \showarticletitle{The Surprising Effectiveness of {MAPPO} in
  Cooperative, Multi-Agent Games}.
\newblock \bibinfo{journal}{\emph{CoRR}}  \bibinfo{volume}{abs/2103.01955}
  (\bibinfo{year}{2021}).
\newblock
\showeprint[arXiv]{2103.01955}
\urldef\tempurl%
\url{https://arxiv.org/abs/2103.01955}
\showURL{%
\tempurl}


\bibitem[\protect\citeauthoryear{Zahavy, Xu, Veeriah, Hessel, Oh, van Hasselt,
  Silver, and Singh}{Zahavy et~al\mbox{.}}{2020}]%
        {DBLP:conf/nips/ZahavyXVHOHSS20}
\bibfield{author}{\bibinfo{person}{Tom Zahavy}, \bibinfo{person}{Zhongwen Xu},
  \bibinfo{person}{Vivek Veeriah}, \bibinfo{person}{Matteo Hessel},
  \bibinfo{person}{Junhyuk Oh}, \bibinfo{person}{Hado van Hasselt},
  \bibinfo{person}{David Silver}, {and} \bibinfo{person}{Satinder Singh}.}
  \bibinfo{year}{2020}\natexlab{}.
\newblock \showarticletitle{A Self-Tuning Actor-Critic Algorithm}. In
  \bibinfo{booktitle}{\emph{Advances in Neural Information Processing Systems
  33: Annual Conference on Neural Information Processing Systems 2020, NeurIPS
  2020, December 6-12, 2020, virtual}}, \bibfield{editor}{\bibinfo{person}{Hugo
  Larochelle}, \bibinfo{person}{Marc'Aurelio Ranzato}, \bibinfo{person}{Raia
  Hadsell}, \bibinfo{person}{Maria{-}Florina Balcan}, {and}
  \bibinfo{person}{Hsuan{-}Tien Lin}} (Eds.).
\newblock
\urldef\tempurl%
\url{https://proceedings.neurips.cc/paper/2020/hash/f02208a057804ee16ac72ff4d3cec53b-Abstract.html}
\showURL{%
\tempurl}


\end{thebibliography}


\appendix
\newpage
\section{The proof of the fixed point theorem} \label{sec:proof}
\newcommand{\RR}{\mathcal{R}}
Let $\mathcal{C}_\infty^K$ be the space of functions $\mathcal{S}\mapsto \mathbb{R}^K$ endowed with the infinity norm $\|f\|_{\mathcal{C}_\infty^K} = \sup_{s\in \mathcal{S}} |f(s)|_\infty$. $\mathcal{C}_\infty^K$ is Banach if $\mathcal{S}$ is finite or we additionally assume that functions are continuos (this encompasses the case when $\mathcal{S}$ is discrete).

For an easy reference we recall the statement of Theorem \ref{thm:fixed_point_theorem_abstract}. Let $c:\mathcal{S}\times\mathcal{A}\to\mathbb{R}_{+}$, $\rho:\mathcal{S}\times\mathcal{A}\to\mathbb{R}_{+}$ be {measurable} functions and set $c_{t}=c(s_{t},a_{t})$, $\rho_{t}=\rho(s_{t},a_{t})$.

Define the operator $\mathcal{R}: \mathcal{C}_\infty^K \mapsto \mathcal{C}_\infty^K$ by
\begin{equation}
\begin{split}
  \mathcal{R}V(s)&:=V(s)+\\
  &+\E_{\mu}{\left[\sum_{t=0}^{+\infty}{\gamma^{t}(c_{0}\cdots c_{t-1})\rho_{t}(r_{t}+\gamma V(s_{t+1})-V(s_{t}))}|s_{0}=s\right]}.
\end{split}
\end{equation}

\begin{theorem} 
  Let $c, \rho$ be such that for any $s \in \mathcal{S}, a\in \mathcal{A}$
\begin{equation} \label{eq:alphadefintion}
  \alpha(s,a):=\rho(s,a)-c(s,a)\E_{a'\sim\mu(\cdot|s')}\left[\rho(s',a')\right]\geq0,
\end{equation}
where $s'$ is the state obtained from $s$ after issuing action $a$.  Assume also that $\E_{\mu}\rho_{0}\geq\beta\in(0,1]$. Then the operator $\mathcal{R}$ is $\mathcal{C}_{\infty}^K$ contraction with a unique fixed point $V^{\tilde{\pi}}$ which is a value function of a policy $\tilde{\pi}$ given by
\begin{equation}\label{eq:policy_after_corrections_appendix}
  \tilde{\pi}(a|s):=\frac{\rho(s,a)\mu(a|s)}{\sum_{b\in\mathcal{A}}\rho(s,b)\mu(b|s)}. 
\end{equation}
The contraction constant $\eta$ is smaller than $1-(1-\gamma)\beta<1$.
\end{theorem}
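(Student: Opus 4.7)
The plan is to follow the two-step template of the single-agent V-trace proof, adapted to the vector-valued setting and to the slightly more general hypothesis \eqref{eq:alphadefintion}: first show that $\mathcal{R}$ is a contraction of modulus at most $1-(1-\gamma)\beta$ on $\mathcal{C}_\infty^K$, then exhibit $V^{\tilde\pi}$ as a fixed point, and finally invoke Banach's fixed point theorem (coordinate-wise on the components of $V\colon\mathcal{S}\to\mathbb{R}^K$) for uniqueness.

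For the contraction estimate, I would fix $V_1,V_2\in\mathcal{C}_\infty^K$, set $W:=V_1-V_2$ so that the reward terms cancel, and reindex the resulting telescoping series (using $u=t+1$ for the $\gamma W(s_{t+1})$ contributions and $u=t$ for the $-W(s_t)$ contributions) in order to collect the coefficient of each $W(s_u)$. A short calculation yields
\[\mathcal{R}V_1(s)-\mathcal{R}V_2(s)=\E_\mu\!\left[(1-\rho_0)W(s)+\sum_{u=1}^\infty \gamma^u(c_0\cdots c_{u-2})(\rho_{u-1}-c_{u-1}\rho_u)W(s_u)\,\middle|\,s_0=s\right].\]
I would then apply the tower property conditioned on the history through $(s_{u-1},a_{u-1})$; invoking hypothesis \eqref{eq:alphadefintion} after integrating out $a_u\sim\mu(\cdot|s_u)$ makes each conditional coefficient nonnegative, so the whole difference is bounded in absolute value by $\|W\|_\infty$ times the sum of the expected coefficients. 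Writing $D_u:=\E_\mu[(c_0\cdots c_{u-1})\rho_u\mid s_0=s]$, the $u\ge 1$ terms take the telescoping form $\gamma^u(D_{u-1}-D_u)$, and Abel summation collapses the total to $1-(1-\gamma)\sum_{u\ge 0}\gamma^u D_u\le 1-(1-\gamma)D_0\le 1-(1-\gamma)\beta$, which is the claimed contraction modulus.

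For the fixed-point identification, the crucial algebraic observation is that for any bounded $f$,
\[\E_{a\sim\mu(\cdot|s)}\bigl[\rho(s,a)f(s,a)\bigr]=Z(s)\,\E_{a\sim\tilde\pi(\cdot|s)}[f(s,a)],\quad Z(s):=\sum_b\rho(s,b)\mu(b|s),\]
which is immediate from \eqref{eq:policy_after_corrections_appendix}. Applying this identity with $f(s,a):=r(s,a)+\gamma\E_{s'}[V^{\tilde\pi}(s')\mid s,a]-V^{\tilde\pi}(s)$ makes the right-hand side vanish by the Bellman equation for $\tilde\pi$. I would then condition each summand in the series defining $\mathcal{R}V^{\tilde\pi}(s)$ on the history through $s_t$, which freezes $c_0,\ldots,c_{t-1}$, and apply the identity to the innermost $(a_t,s_{t+1})$-expectation; every term dies, giving $\mathcal{R}V^{\tilde\pi}=V^{\tilde\pi}$, with uniqueness automatic from the contraction.

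The main technical obstacle is the precise interpretation of \eqref{eq:alphadefintion} under stochastic transitions: the conditional-nonnegativity step in the contraction argument truly needs $\alpha(s,a)\ge 0$ to hold pointwise in the successor $s'$, not merely on average over the transition kernel, so I would state this convention explicitly at the start of the proof. This is presumably the mathematical inaccuracy flagged in Remark~\ref{eq:original_proof_glitch} relative to \cite[Theorem~1]{DBLP:conf/icml/EspeholtSMSMWDF18}. The remaining subtleties — exchanging infinite sum and expectation, and the vanishing tail $\gamma^N D_N\to 0$ needed for Abel summation — are routine dominated-convergence arguments supported by the geometric factor $\gamma^u$ and the monotonicity $D_u\le D_{u-1}$ already supplied by the hypothesis.
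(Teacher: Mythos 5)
Your proposal is correct and follows essentially the same route as the paper's proof: the same regrouping of the series to isolate the coefficient $(1-\rho_0)$ of $W(s)$ and $\rho_{u-1}-c_{u-1}\rho_u$ of each $W(s_u)$, the same conditioning (the paper formalizes your tower-property step via the filtration $\mathcal{F}_{t}=\sigma(s_0,a_0,\ldots,a_{t-1},s_t)$, and your conditioning must indeed include $s_u$ so that integrating out $a_u\sim\mu(\cdot|s_u)$ makes sense), the same telescoping bound $1-(1-\gamma)\beta$, and the same change-of-measure identity plus Bellman equation to identify $V^{\tilde\pi}$ as the fixed point. Your closing observation that \eqref{eq:alphadefintion} must be read pointwise in the successor $s'$ is precisely the measurability issue the paper's Remark~\ref{eq:original_proof_glitch} raises about the original V-trace proof.
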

\begin{remark}\label{eq:original_proof_glitch}
  We note that the proof \cite[Theorem 1]{DBLP:conf/icml/EspeholtSMSMWDF18} has some glitches. A careful examination reveals that the analysis of the first displayed equation on page 12 in \cite{DBLP:conf/icml/EspeholtSMSMWDF18} is not correct. We fix it by making introducing filtration and analyzing the measurability of explicitly and argue that such an approach makes the proof more clear
\end{remark}

\begin{proof}
  Let $\tilde{c}_{t}:=\prod_{u=0}^{t}c_{u}$ with the convention $\tilde{c}_{t}=1$ for $t\leq0$. We use the same convention for $\rho_{s}$. It is convenient to express $\RR$ in the following form
  \begin{align*}
    \RR V(s)=&(1-\E_{\mu}\rho_{0})V(s)+\\
    +&\E_{\mu}\left[\sum_{t=0}^{+\infty}\gamma^{t}\tilde{c}_{t-1}\left(\rho_{t}r_{t}+\gamma[\rho_{t}-c_{t}\rho_{t+1}]V(s_{t+1})\right)\right].
  \end{align*}
  Fix $V_{1},V_{2} \in \mathcal{C}_\infty^K$ and put $\mathcal{C}_\infty^K\ni \Delta V:=V_{1}-V_{2}$. For $s \in \mathcal{S}$ we write
  \begin{align*}
    \RR V_{1}(s)-\RR V_{2}(s)=&(1-\E_{\mu}\rho_{0})\Delta V(s)+\\
    +&\E_{\mu}\left[\sum_{t=0}^{+\infty}\gamma^{t+1}\tilde{c}_{t-1}[\rho_{t}-c_{t}\rho_{t+1}]\Delta V(s_{t+1})\right].
  \end{align*}
We define filtration $\left\{ \mathcal{F}_{t}\right\} _{t=0}^{+\infty}$ by
\begin{equation*}
  \mathcal{F}_{t}:=\sigma((s_{0},a_{0},\ldots,a_{t-1},s_{t})).
\end{equation*}
We rewrite in terms of conditional expectation
\begin{align*}
  a:=&\E_{\mu}\left[\tilde{c}_{t-1}[\rho_{t}-c_{t}\rho_{t+1}]\Delta V(s_{t+1})\right]\\
  =&\E_{\mu}\left(\E_{\mu}\left[\tilde{c}_{t-1}[\rho_{t}-c_{t}\rho_{t+1}]\Delta V(s_{t+1})\vert\mathcal{F}_{t+1}\right]\right).
\end{align*}
Clearly, all terms except for $\rho_{t+1}$ are $\mathcal{F}_{t+1}$ measurable, thus $$a=\E_{\mu}\left[\tilde{c}_{t-1}[\rho_{t}-c_{t}\E_{\mu}\left(\rho_{t+1}\vert\mathcal{F}_{t+1}\right)]\Delta V(s_{t+1})\right].$$
Recall \eqref{eq:alphadefintion} and observe that $\rho_{t}-c_{t}\E_{\mu}\left(\rho_{t+1}\vert\mathcal{F}_{t+1}\right)=\alpha(s_{t},a_{t})=:\alpha_{t+1}$. Let us also put by convention $\alpha_{0}=1-\E_{\mu}\rho_{0}$. Thus shifting indices we get 
\begin{equation*}
  \RR V_{1}(s)-\RR V_{2}(s)=\E_{\mu}\left[\sum_{t=0}^{+\infty}\gamma^{t}\tilde{c}_{t-2}\alpha_{t}\Delta V(s_{t})\right].
\end{equation*}
By our assumptions we have $\tilde{c}_{t},\alpha_{t}\geq0$ and thus we get 
\begin{equation*}
  |\RR V_{1}(s)-\RR V_{2}(s)|
  \leq\Vert V_{1}-V_{2}\Vert_{\mathcal{C}_{\infty}^K}\E_{\mu}\left[\sum_{t=0}^{+\infty}\gamma^{t}\tilde{c}_{t-2}\alpha_{t}\right].
\end{equation*}
We are left with rather straightforward calculations (assume by convention that $\rho_{-1}=1$).
\begin{align*}
  \E_{\mu}\left[\sum_{t=0}^{+\infty}\gamma^{t}\tilde{c}_{t-2}\alpha_{t}\right]	&=\E_{\mu}\left[\sum_{t=0}^{+\infty}\gamma^{t}\tilde{c}_{t-2}\rho_{t-1}\right]-\E_{\mu}\left[\sum_{t=0}^{+\infty}\gamma^{t}\tilde{c}_{t-1}\E_{\mu}\left(\rho_{t}\vert\mathcal{F}_{t}\right)\right]\\
	&=\E_{\mu}\left[\sum_{t=0}^{+\infty}\gamma^{t}\tilde{c}_{t-2}\rho_{t-1}\right]-\E_{\mu}\left[\sum_{t=0}^{+\infty}\gamma^{t}\tilde{c}_{t-1}\rho_{t}\right]\\
	&=1+\gamma\E_{\mu}\left[\sum_{t=0}^{+\infty}\gamma^{t}\tilde{c}_{t-1}\rho_{t}\right]-\E_{\mu}\left[\sum_{t=0}^{+\infty}\gamma^{t}\tilde{c}_{t-1}\rho_{t}\right]\\
	&=1+(\gamma-1)\E_{\mu}\left[\sum_{t=0}^{+\infty}\gamma^{t}\tilde{c}_{t-1}\rho_{t}\right]\leq1+(\gamma-1)\beta.
\end{align*}
The last inequality follows by dropping all summands except $t=0$.

Now, we are to determine the unique fixed point. Recall \eqref{eq:policy_after_corrections_appendix}, we calculate 
\begin{align*}
  &\E_{\mu}\left[\rho_{t}\left(r_{t}+\gamma V^{\tilde{\pi}}(s_{t+1})-V^{\tilde{\pi}}(s_{t})\right)|s_{t}\right]\\
  &=c\E_{\tilde{\pi}}\left[\rho_{t}\left(r_{t}+\gamma V^{\tilde{\pi}}(s_{t+1})-V^{\tilde{\pi}}(s_{t})\right)|s_{t}\right]\\
  &=0,
\end{align*}
where $c=\sum_{b\in\mathcal{A}}\rho(s,b)\mu(b|s)$ is the normalizing constant and the second equality hold by the Bellman equation. 
\end{proof}

\newpage
\section{Centralized training and decentralized execution}\label{sec:decentralized_appendix}

In a multi-agent problem, the agents take actions based on their local observations $s_i$, according to their policies $\pi_i$. {\bf Decentralized training} is a simple approach of learning $\pi_i$, using any (single-agent) RL algorithm separately for each agent. To better utilize the structure of a multi-agent problem, a paradigm of {\bf centralized training and decentralized execution} was introduced. The key idea is to centralize the learning process of all the agents -- use shared knowledge provided by observations and any other information available during training to more effectively optimize the decentralized policies.
It is now considered a leading paradigm and is usually chosen off-the-shelf.

Recently the authors of \cite{lyu2021contrasting} suggested that this should be reinvestigated.
Sharing the knowledge in centralized training is indeed beneficial, but the paradigm of independent training also has advantages. The multi-agent problems lack the strong theoretical guarantees associated with the standard case. For example, if we train all the agents simultaneously, the environment changes during training from every agent's perspective.
Such non-stationarity can destabilize standard algorithms.
According to \cite{lyu2021contrasting}, decentralized learning partially mitigates this issue -- for every agent, a separate value network is trained; thus it averages much of the stochasticity in the environment, producing more stable estimates. On the other hand, because of the limited information, these values are less accurate. Therefore choosing between centralized and decentralized training is a tradeoff.

To address these issues, we compared \mt{} with its decentralized version, \imatrace{}.
However, in the StarCraft Multi-Agent Challenge, the decentralized version learns much slower and fails to reach good performance; see Figure~\ref{fig:appendix_decentralized}.

\begin{figure}[h!]
    \centering
    \includegraphics[width=\linewidth]{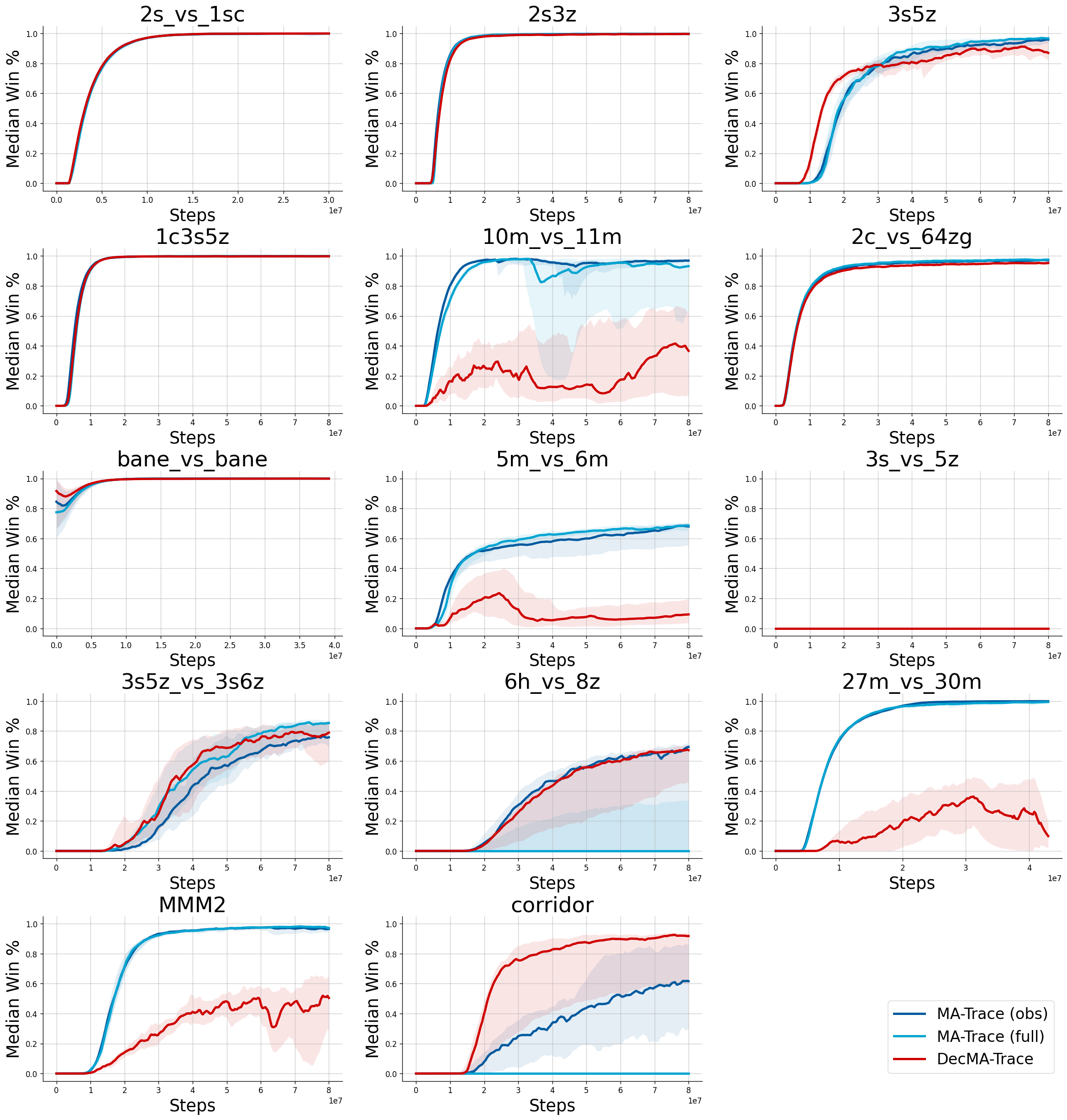}
    \caption{\small Comparison of \mt{} and \imatrace{} on the StarCraft Multi-Agent Challenge tasks.}
    \label{fig:appendix_decentralized}
\end{figure}

\newpage
\section{Networks' architectures}\label{sec:architectures_appendix}
In our experiments, we use feed-forward networks with two hidden layers of $64$ neurons and ReLU activations (without normalization).
This is smaller than the networks used in benchmark \cite{yu2021benchmarking} and \cite{DBLP:journals/corr/abs-2003-08839}, but we found this sufficient to obtain good results. In this work, we focus on algorithmic aspects rather than tuning architectures. However, for completeness, we include a discussion of some most popular design choices.

\subsection{Sharing parameters}\label{sec:separate_networks_exps}

Our default (and most effective) scheme uses a single shared network for the actors and a separate one for the critic. We experimented with sharing feature extractors between the agents and the critic, which performed worse. We also checked the performance when training separate networks for all the components. Interestingly, with this approach, the learning is much faster on the easy tasks, such as \textit{3s5z} or \textit{2c\_vs\_64zg}, but completely fails on the hardest ones; see Figure~\ref{fig:appendix_separate}.

\begin{figure}[h]
    \centering
    \includegraphics[width=\linewidth]{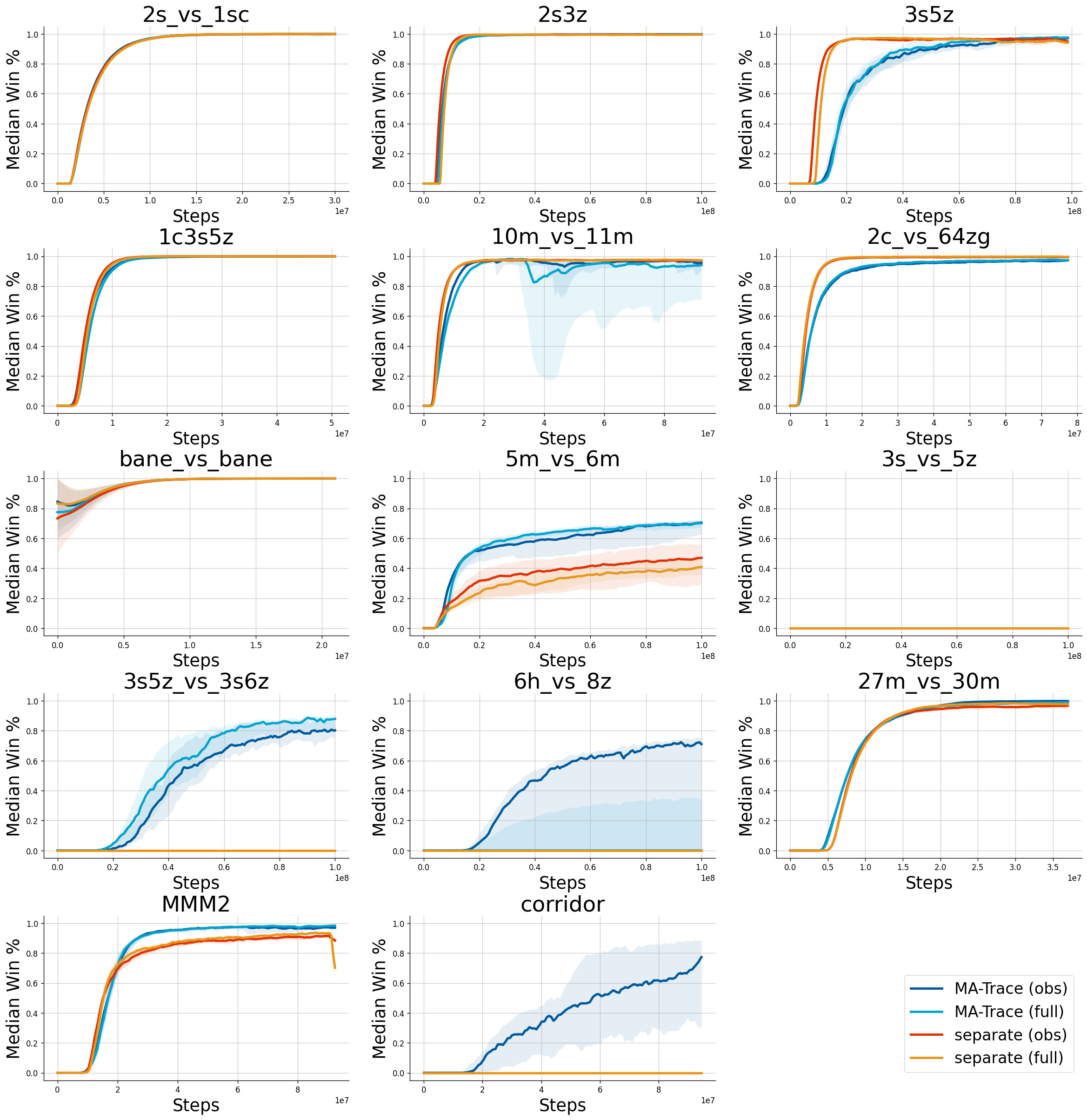}
    \caption{\small Comparison of standard \mt{} (with a shared actor network) and its ablation with separate networks.}
    \label{fig:appendix_separate}
\end{figure}

\subsection{ID experiments}\label{sec:id_experiments_appendix}
Sharing the agents' networks, as described in Section \ref{sec:separate_networks_exps}, can lead to poor results if agents are not homogenous (for example, the behavior of shooting units should differ from melee units). This might be circumvented by adding units' characteristics (which is a default in SMAC) or enriching the observations with one-hot encoded ID. In Figure~\ref{fig:appendix_id} we present the results of experiments in which we use these two mechanisms. We observe some improvements, which are, however, relatively minor. 

\begin{figure}[h]
    \centering
    \includegraphics[width=\linewidth]{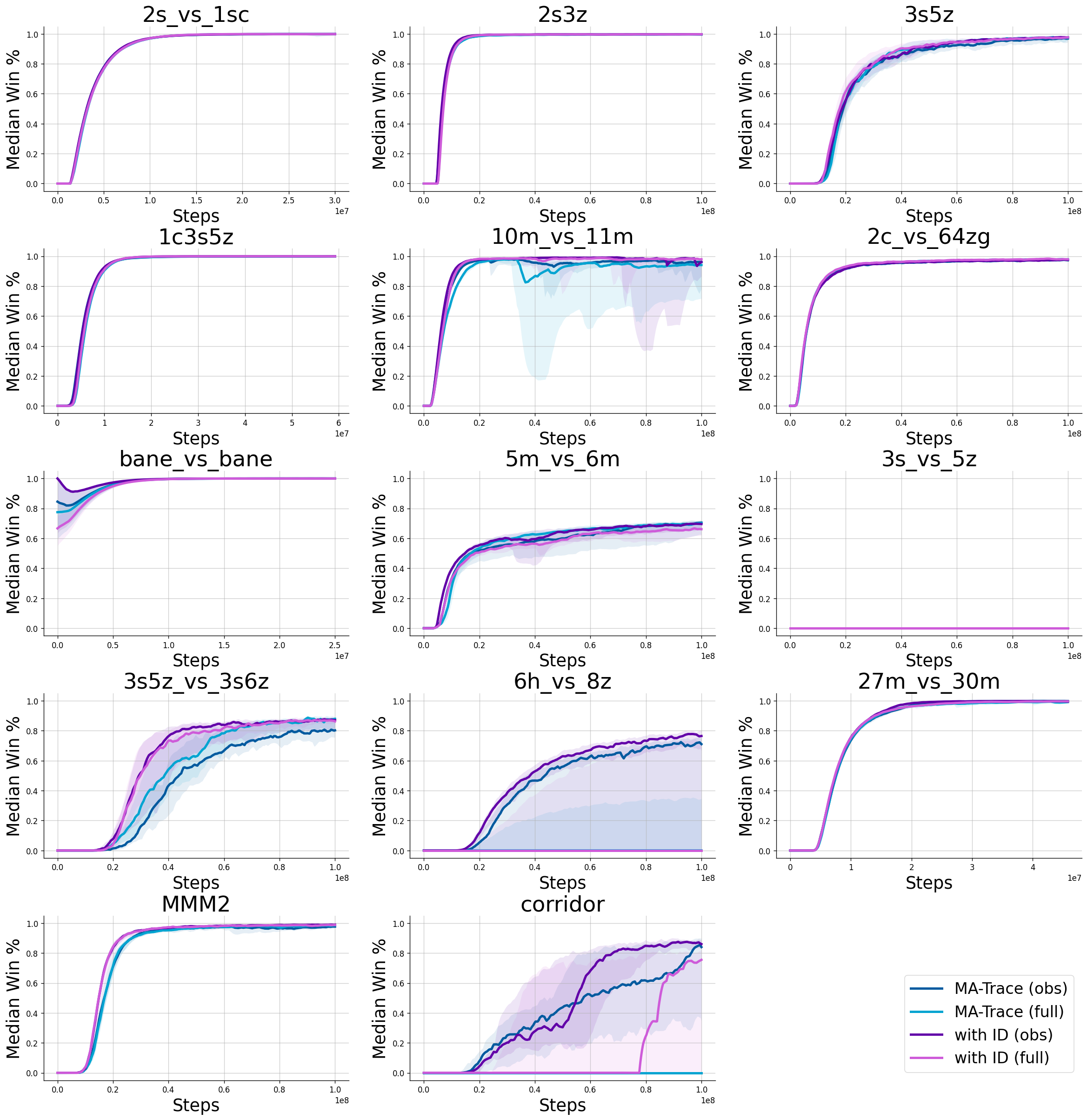}
    \caption{\small Comparison of \mt{} conditioned on observation with and without (default) agents' IDs.}
    \label{fig:appendix_id}
\end{figure}

\subsection{Agents with memory}\label{sec:appendix_memory}
The units in StarCraft Multi-agent Challenge have limited sight range, thus the environment is partially observable.
A common approach to mitigate such an issue, is to give the agents memory, e.g. by using recurrent networks or frame stacking.
The authors of \cite{DBLP:journals/corr/abs-2003-08839} show that using recurrent policy is required to achieve good results in the hardest tasks.
However, our experiments show that memoryless policies can be successfully trained to solve all the tasks (possibly except \textit{3s\_vs\_5z}, see Section~\ref{section:3svs5z}).

To implement a simple memory it is enough to pass a few previous observations concatenated (``framestack'').
We experimented with both observation-based and state-based critics.
Figures \ref{fig:appendix_stacking_obs} and \ref{fig:appendix_stacking_full} show the progress of training memory-equipped agents. One can observe hardly any difference in most tasks. On the easy task \textit{2c\_vs\_64zg}, using memory leads to faster training.
What is particularly interesting, on the \textit{corridor}, task the stacked versions learn much faster.
This may be related to the strategy learned by \mt{}, in which at some point a group of units has to hide and wait (see Section \ref{section:corridor} for detailed description). Execution of this strategy is probably easier when using at least short memory. However, on some other tasks, such as \textit{5m\_vs\_6m}, training with memory is much less stable and effective.

\begin{figure}[h]
    \centering
    \includegraphics[width=\linewidth]{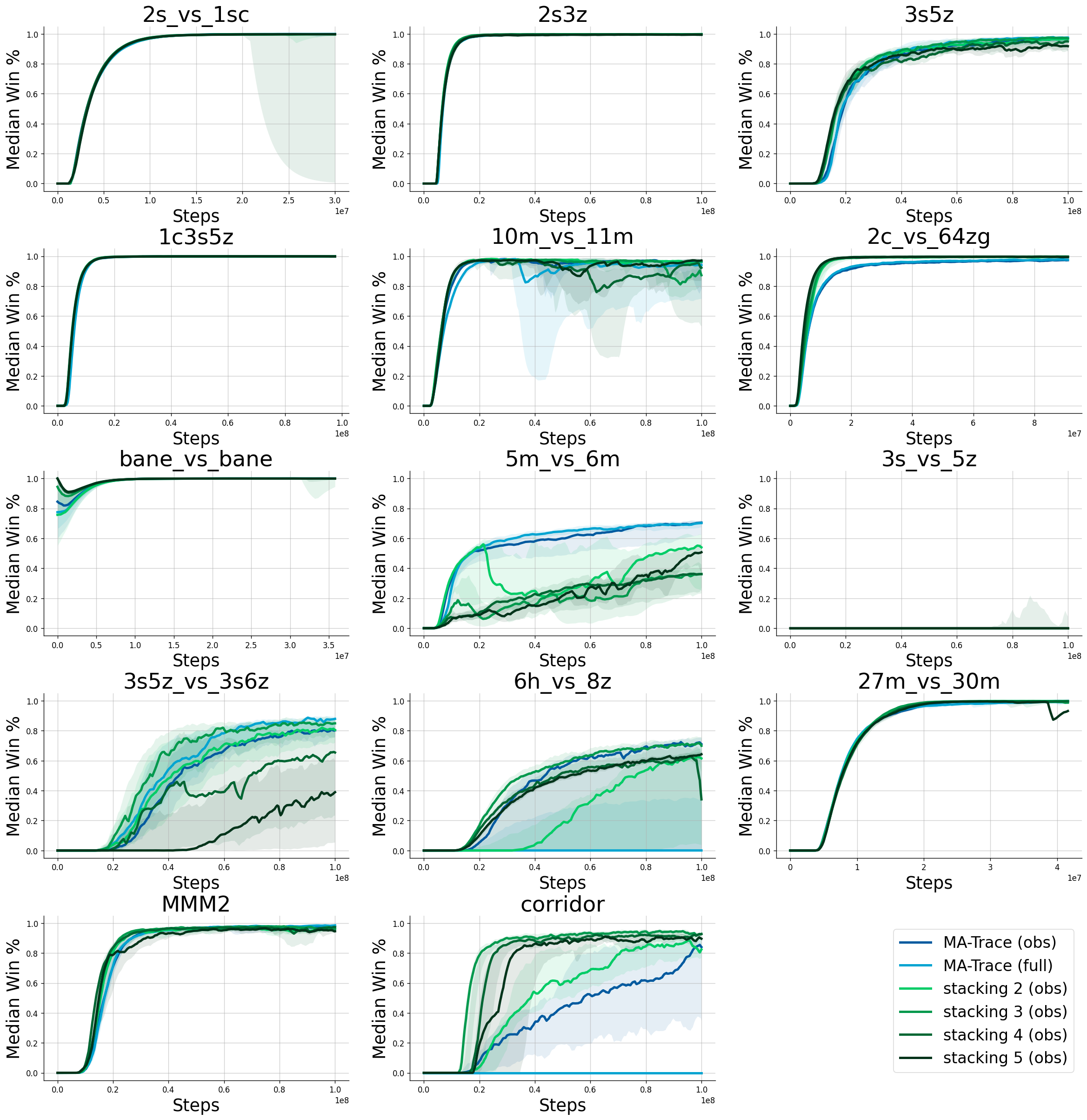}
    \caption{\small Comparison of \mt{} with growing number of stacked frames. The critic networks in ablations are conditioned on aggregated observations.}
    \label{fig:appendix_stacking_obs}
\end{figure}

\begin{figure}[h]
    \centering
    \includegraphics[width=\linewidth]{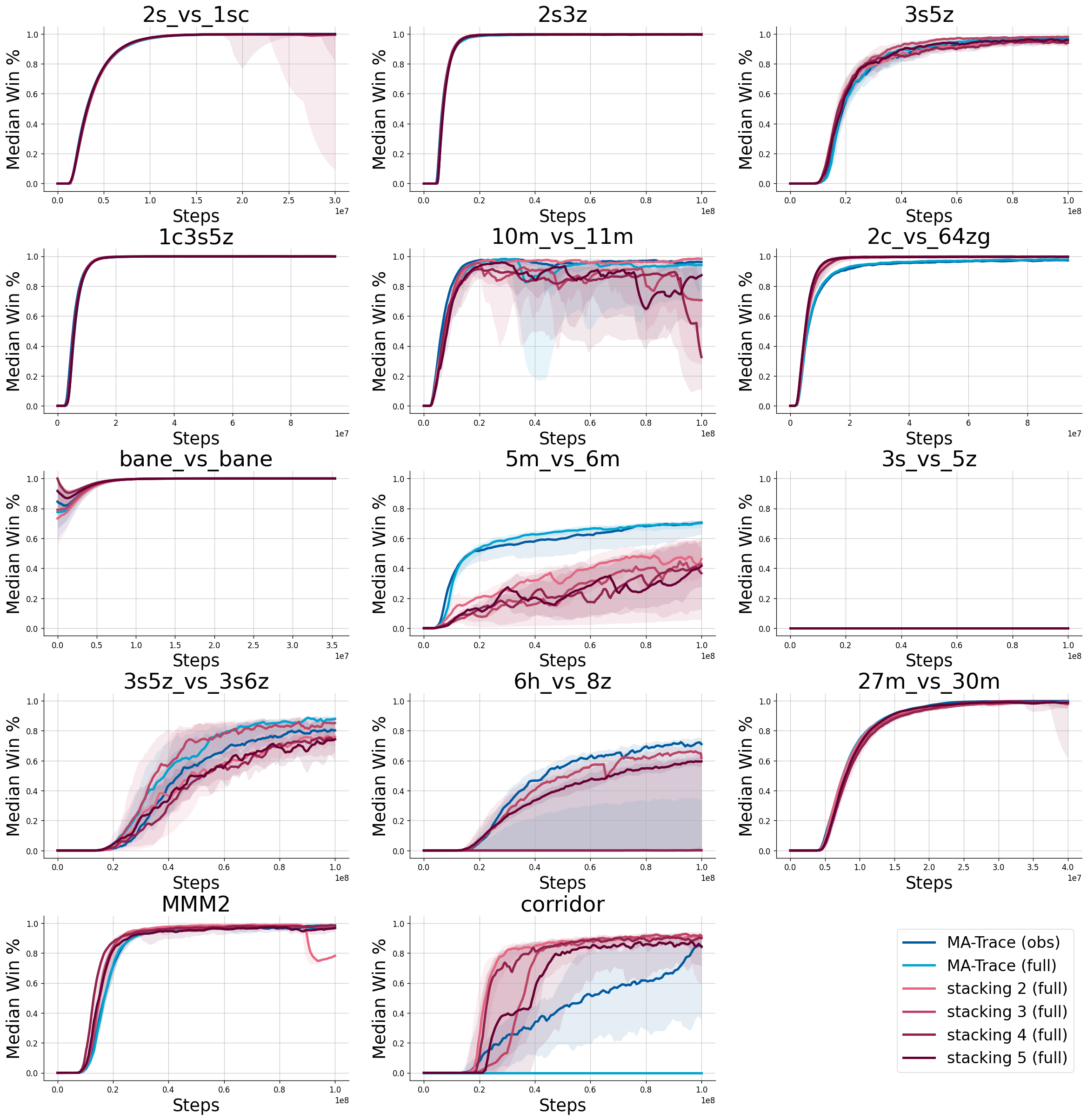}
    \caption{\small Comparison of \mt{} with growing number of stacked frames. The critic networks in ablations are conditioned on environment state.}
    \label{fig:appendix_stacking_full}
\end{figure}

\newpage
\section{Training details}\label{sec:training_appendix}
\subsection{SEED RL} \label{section:seed_rl}
We base our code on SEED RL \cite{espeholt2019seed}, which is an open-source framework for distributed learning, licensed under the Apache License, Version 2.0.
This library provides an implementation of V-Trace for single-agent environments.

\subsection{Hyperparameters} \label{sec:hiperparemters}
In our experiments the parameters crucial to the final performance were: \textit{learning rate} and \textit{entropy cost}.
For other hyperparameters, we use the default values provided by SEED.
The most relevant, common for all the experiments, are listed in Table~\ref{tab:default_params}.

\begin{table}[h]
\setlength{\tabcolsep}{2em}
\centering
\begin{tabular}{cc}
	\toprule
	hyperparameter & value \\
	\midrule
	batch size	& $32$ \\
	optimizer	& Adam \\
	gamma		& $0.99$ \\
	$\rho$		& $1.0$ \\
	$c$		& $1.0$ \\
	$\lambda$	& $1.0$ \\
	initial entropy cost	& $1.0$ \\
	target entropy	& $10^{-5}$ \\
	\bottomrule
\end{tabular}
\smallskip
\caption{Default parameters for our experiments.}
\label{tab:default_params}
\end{table}

We searched for the best learning rate in the set $[3.5\cdot10^{-3}, 2.5\cdot10^{-3}, 1.5\cdot10^{-3}, 10^{-3}, 7\cdot10^{-4}, 5\cdot10^{-4}, 3.5\cdot10^{-4}]$.
We experimented with adding learning rate schedules with warmup and decay, though no such scheme appeared to be beneficial.

Another hyperparameter that has a strong influence on the final results is \textit{entropy cost}.
By default, SEED sets it to a constant value of $2.5\cdot10^{-4}$ and allows to use annealing.
We found that aggressive exploration at the beginning is crucial to reach good results. We annealed \textit{entropy cost} from $1$ towards $10^{-5}$. The speed of adjustment was tuned in the set $[10, 5, 2.5, 1, 0.5]$.

In Table~\ref{tab:specific_parameters}, we show the best parameters for our experiments, found by the above grid-search.
Unlike in some other works, we found no advantage of using gradient clipping; thus, we leave the gradients not clipped.

\begin{table}[h]
\centering
\begin{tabular}{ccc}
	\toprule
	hyperparameter 		& \mt{} (obs) 	& \mt{} (full) \\
	\midrule
	learning rate		& $10^{-3}$ 	& $7\cdot10^{-4}$ \\
	entropy adjustment	& 10 		& 10 \\
	\bottomrule
\end{tabular}
\smallskip
\caption{Specific parameters for our experiments.}
\label{tab:specific_parameters}
\end{table}

\subsection{Infrastructure used} \label{sec:infrastructure_used}
The typical configuration of a computational node used in our experiments was: the Intel Xeon E5-2697 $2.60$GHz processor with $128$GB memory. On a single node, we ran one experiment with $30$ workers.
A typical experiment was run for about $20$h. For the final evaluation, we extended the training to 3 days, which is usually equivalent to about $3\cdot10^8$ environment steps. We did not use GPUs; we found that with the relatively small size of the network it offers only slight wall-time improvement while generating substantial additional costs. 

\newpage
\section{Ablations}\label{sec:ablations_appendix}

\subsection{Critic comparison} \label{sec:critic_comparison_appendix}

During centralized training, the critic network in \mt{} algorithm uses any information available. A natural choice is to aggregate the observations available to the agents, and we denote this version as \mt{} (obs). This might not be a sufficient statistic of the (Markov) state of the environment. SMAC provides additional access to such a state description, which we use in \mt{} (full). Intuitively, using complete information should be advantageous.

As shown in Figure~\ref{fig:all_training_curves}, on most tasks, both the versions do not differ much.
On \textit{3s5z\_vs\_3s6z} there is a small advantage on the full-state side.
However, \mt{} (full) shows little progress on \textit{6h\_vs\_8z} and \textit{corridor}, as opposed to \mt{} (obs).
Therefore we consider \mt{} (obs) to be our default version.

Such behavior is a bit counter-intuitive. We speculate that some information available in the agents' observation is not easily accessible (computable) for the full state. To verify this, we compared the two versions with another, \mt{} (obs+full), which uses both the aggregated observations and the full state. As we can see in Figure~\ref{fig:all_training_curves}, it trains similarly to \mt{} (obs), without significant advantage on any task. Moreover, on the hardest tasks, the training progresses a bit slower. We leave a precise explanation of this behavior as future work.

\subsection{Importance sampling ablation}\label{sec:is_ablation_appendix}
We claim that the strong performance of \mt{} is due to using importance weights. To verify this statement, we compared \mt{} with its ablation without importance weights.
The training curves for all the tasks are shown in Figure~\ref{fig:appendix_nois}.
Without the corrections, the training is unstable and reaches good performance only on the easiest tasks.
The difference gets even larger when using more compute workers.
One notable exception is the \textit{corridor} task, in which the ablated version trains faster than \mt{}, though eventually both reach similar results.

\begin{figure}[h]
    \centering
    \includegraphics[width=\linewidth]{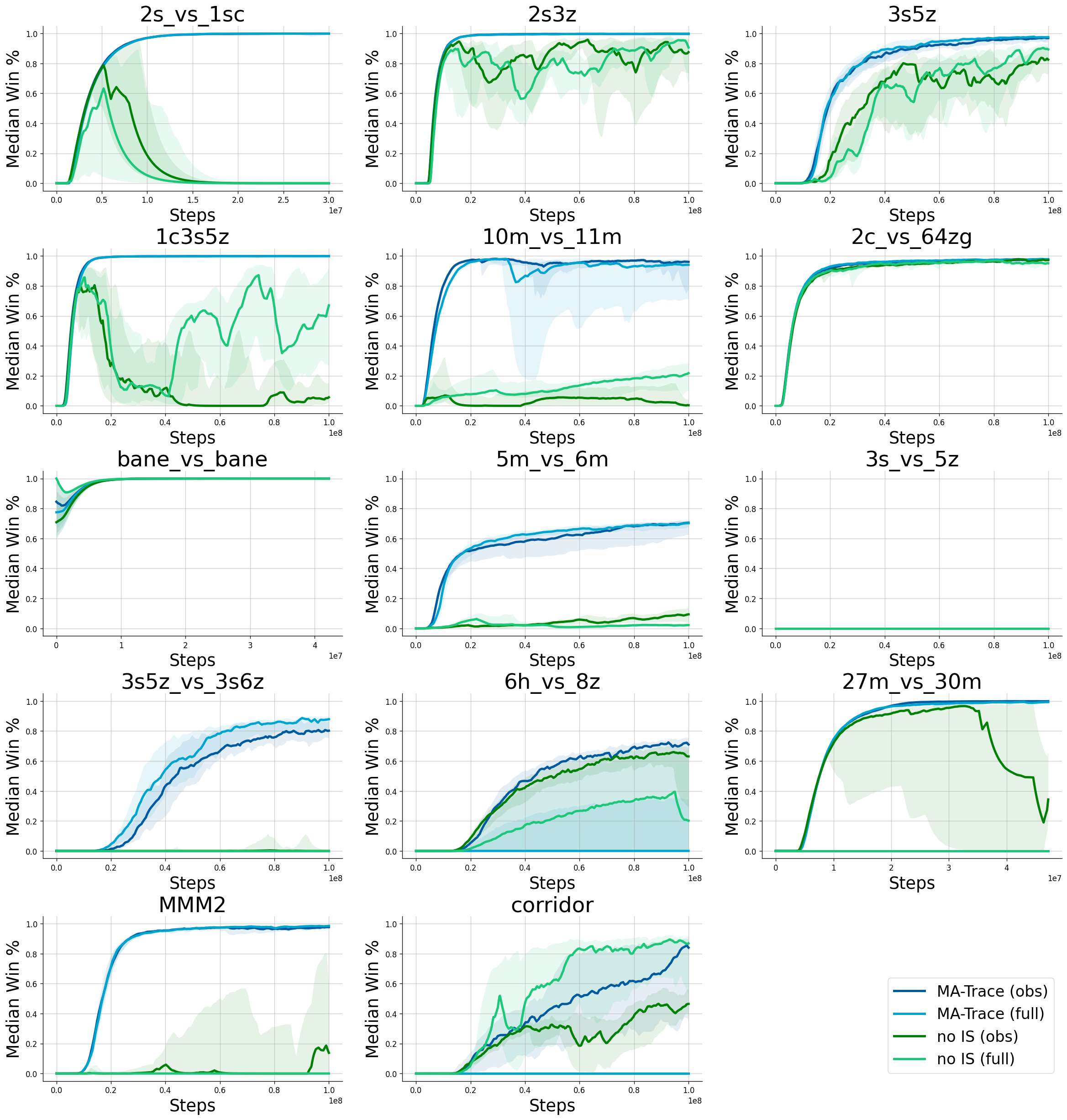}
    \caption{\small Comparison of \mt{} with and without importance sampling.}
    \label{fig:appendix_nois}
\end{figure}

\subsection{Scaling experiments}\label{sec:scaling_experiments}
\piotrm{Why do we need this section}\michalz{It is referred in the main part. However, seems we have nothing more to say, can be removed.}
\mt{} utilizes importance sampling to reduce the shifts arising naturally in distributed training (when behavioral policies lag behind the target policy). This allows our algorithm to be highly scalable.
Our experiments confirm that \mt{} scales almost linearly on at least $70$ workers.

As shown in Section~\ref{sec:is_ablation_appendix}, the importance weights indeed enable stable training, even when distributed to many workers.
What is equally important, our experiments show that scaling does not affect the learned policies significantly.

\newpage
\section{StarCraft Multi-Agent Challenge} \label{sec:smac_appendix}
To evaluate \mt{}, we use StarCraft Multi-Agent Challenge (SMAC) \cite{samvelyan19smac}.
It is a standard benchmark for multi-agent algorithms, used e.g. by \cite{DBLP:journals/corr/abs-2003-08839,6_DBLP:conf/aaai/FoersterFANW18, yu2021benchmarking} and many others. The challenge is based on a popular real-time strategy game StarCraft II and consists of $14$ micromanagement tasks. Each task is a small arena in which two teams, controlled by the player by the built-in AI, fight against each other. The goal in every task is to defeat (kill) all the enemy units.

Units belong to one of the three races: Protoss, Terran or Zerg. Additionally, they are divided on a number of classes with unique characteristic, such as speed, shooting range, fire power etc. In each turn they can move or attack an enemy in their shooting range. A unit is considered defeated if its health drops to $0$. A defeated unit remain inactive.

SMAC provides a variety of different tasks. In the easier tasks, the opponents control the same forces. Therefore to win such a game, it is enough to coordinate slightly better than the built-in AI. In the harder tasks, however, the computer starts with a stronger squad. This can be a minor advantage, such as in the task \textit{10 marines vs 11 marines}, or quite a big difference. In particularly hard scenarios, such as \textit{corridor}, it is unreasonable for the player to engage in an open fight, so it is essential to develop a long-term strategy to obtain a positional advantage.

\piotrm{The next sentence is in odds with the previous one.}\michalz{Is it? As far as I understand the concern, "microtrick" actually can be a long-term strategy, the "micro"-part does not enforce any kind of locality I think.}
For the hardest tasks, the authors of the challenge consider the so-called microtricks sufficient to win consistently.
However, it appears that \mt{} in some cases develop unexpected strategies, see e.g. Section~\ref{section:corridor}.
What is particularly interesting, our algorithm manages to learn some techniques associated with professional human players, such as focusing fire, withdrawing low-health units, hit-and-run, sacrificing a unit to deceive the opponent and others.

Units in the game have limited sight range, which makes the environment partially observable.
The observations received by individual agents contain information about all the visible units (including themselves) -- their health, energy, position, class, and other relevant features. All the units beyond the sight range are marked as dead (i.e., defeated and invisible units are not distinguished).
It is possible that the aggregated observations do not provide full information, for there can be enemy units hidden beyond the sight range of any ally. Therefore, to facilitate centralized training, SMAC provides additional access to the full state of the environment.

Learning from binary reward (win/lose) is prohibitively hard in most tasks. Therefore SMAC provides dense rewards to enable training. The team receives additional points for damage dealt and defeating a unit. This scheme is arguably natural and leads to successful training. However, in some cases, it might reinforce undesired behaviors, see, e.g., Section~\ref{section:3svs5z}.

\subsection{\textit{3s\_vs\_5z} task}\label{section:3svs5z}
\mt{} masters all the tasks except \textit{3s\_vs\_5z}, see Table \ref{tab:results}.  In that scenario, we control $3$ Stalkers fighting against $5$ Zealots. Stalkers can attack the enemy from a distance; however, they are no match for Zealots in close combat. A strategy to gain an advantage is to shoot the enemies while they are away and flee when they get close.

All the units in this task are Protoss, so they all have protective shields. The shields absorb some amount of damage, until they are down. However, as opposed to regular health, the shields regenerate slowly with time. As dealing damage yields rewards, it might be beneficial to keep enemy alive infinitely. Apparently, \mt{} learns to this strategy.

Figure~\ref{fig:3svs5z} shows an example of the winning rate and episode rewards. As we can see, after a short training, our algorithm wins almost every time. Further, it discovers that the reward scheme can be exploited -- at some point, the mean return increases fast, while the actual win rate decreases and becomes unstable.

\begin{figure}[h!]
    \centering
    \includegraphics[width=0.6\linewidth]{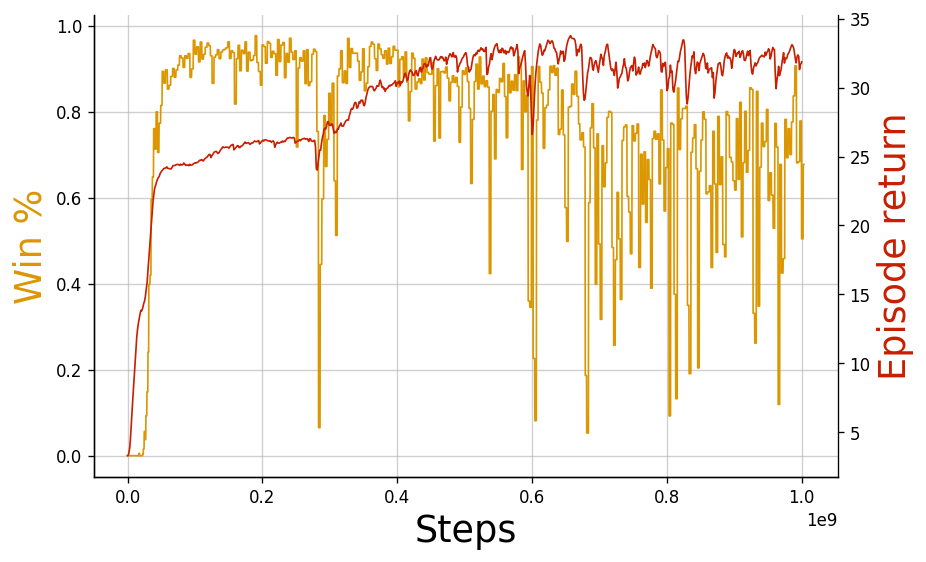}
    \caption{\small The winning rate and episode rewards in a  training for the task \textit{3s\_vs\_5z}.
    }
    \label{fig:3svs5z}
\end{figure}

\subsection{\textit{Corridor} task}\label{section:corridor}

Another \textit{super-hard} scenario (according to \cite{DBLP:journals/corr/abs-2003-08839}) is \textit{corridor}.
In this task, we control a team of $6$ Zealots against $24$ enemy Zerglings.
Though Zealots are far more powerful in combat, they are outnumbered; thus, the open fight is clearly an unreasonable strategy.
However, the fighting arena contains a narrow passage.
The authors of SMAC suggest that a winning strategy is to gather the forces in that passage, where the number of enemy units should lose its importance (possibly inspired by the Battle of Thermopylae).

However, \mt{} develops an alternative interesting strategy. Firstly, our forces split into two groups.
One (strong) hides in the corner, where it easily defeats a few enemies, while the other (one or two units) attracts majority of the enemies to the other side and sacrifice itself.
After defeating the second group, the enemies pass to the far side of the arena and wait, unaware of the hidden group.
Then the strong group attack them from behind and defeat the Zerglings one by one.

The strategy is outlined in Figure \ref{fig:corridor}.

\begin{figure}[h]
\centering

\begin{subfigure}[h]{0.7\linewidth}
	\centering
	\includegraphics[width=0.9\linewidth]{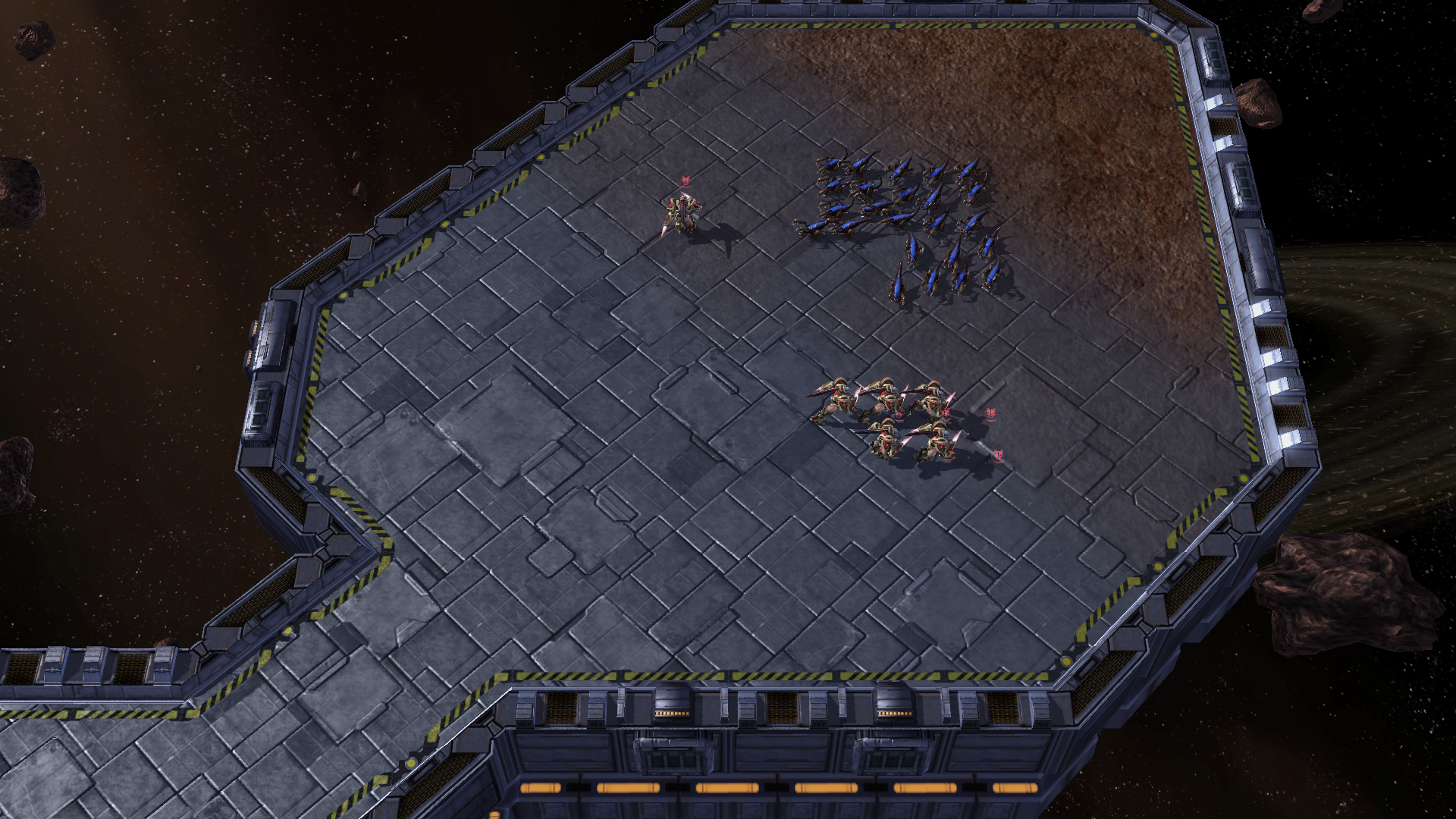} 
	\caption{Split into two groups.}
\end{subfigure}
\begin{subfigure}[h]{0.7\linewidth}
	\centering
	\includegraphics[width=0.9\linewidth]{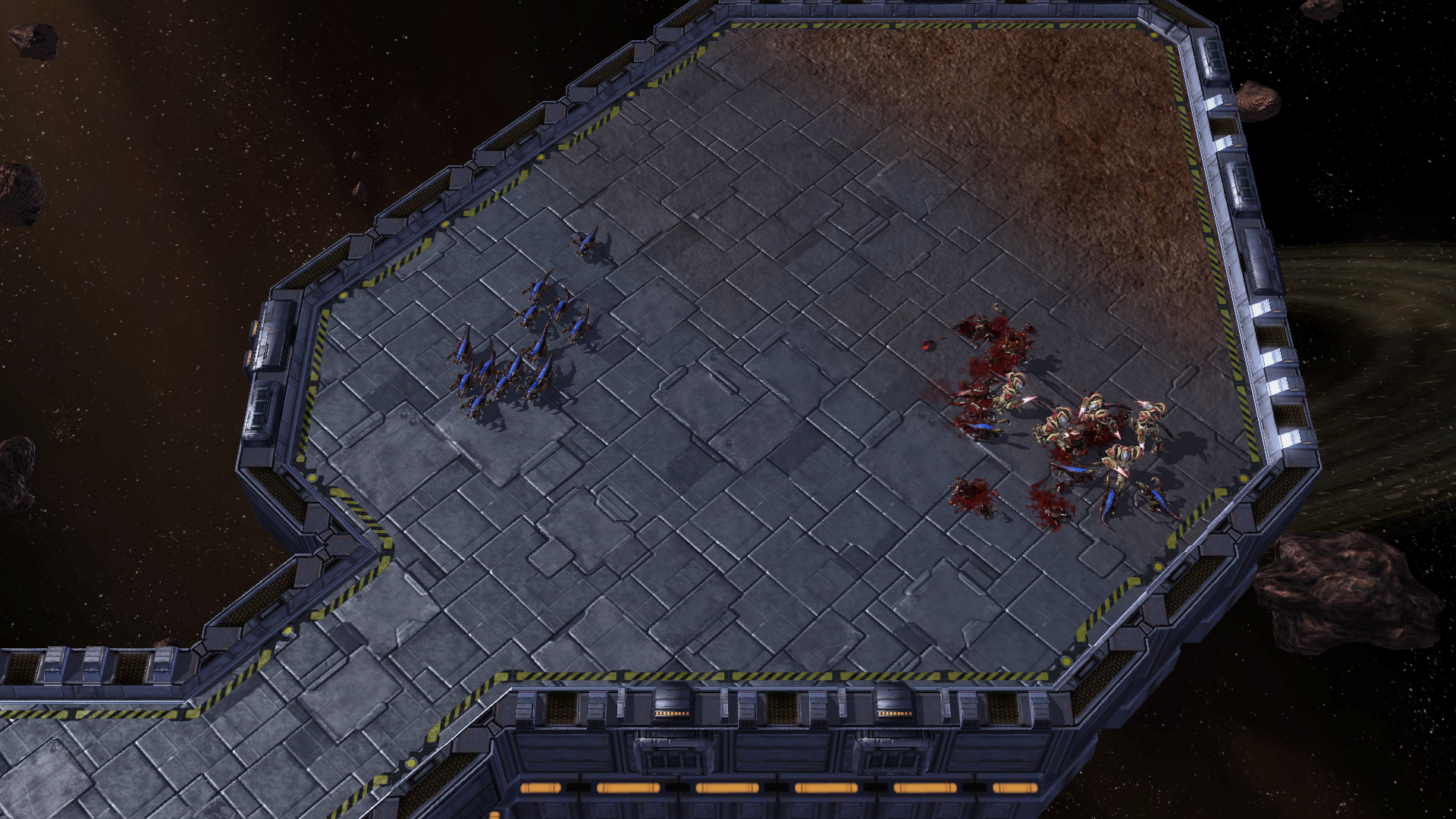}
	\caption{Hide the stronger, sacrifice the weaker.}
\end{subfigure}
\begin{subfigure}[h]{0.7\linewidth}
	\centering
	\includegraphics[width=0.9\linewidth]{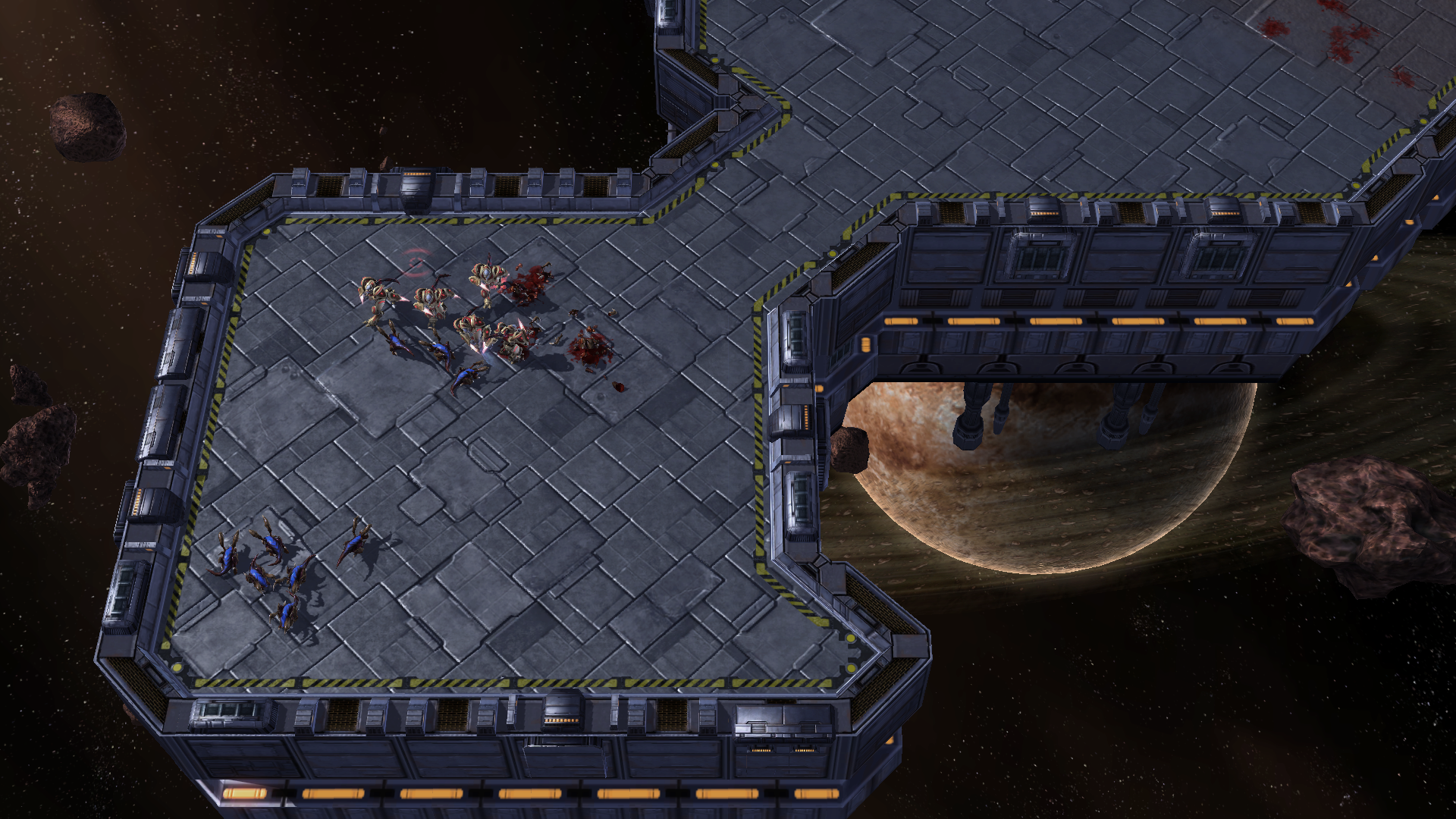}
	\caption{Attack from behind.}
\end{subfigure}

\caption{\small The consecutive parts of strategy executed by \mt{} on the \textit{corridor} task.
The yellow soldiers are our units, while the blue creatures are enemy units.}
\label{fig:corridor}
\end{figure}

\newpage
\section{Full training data}\label{section:all_training_data}
In this section, we provide the main practical results of our work -- complete training data for all standard versions of \mt{} on all the SMAC tasks.
They were trained for three days or until convergence. We report the median win rates and interquartile ranges.

\begin{figure}[h]
    \centering
    \includegraphics[width=\linewidth]{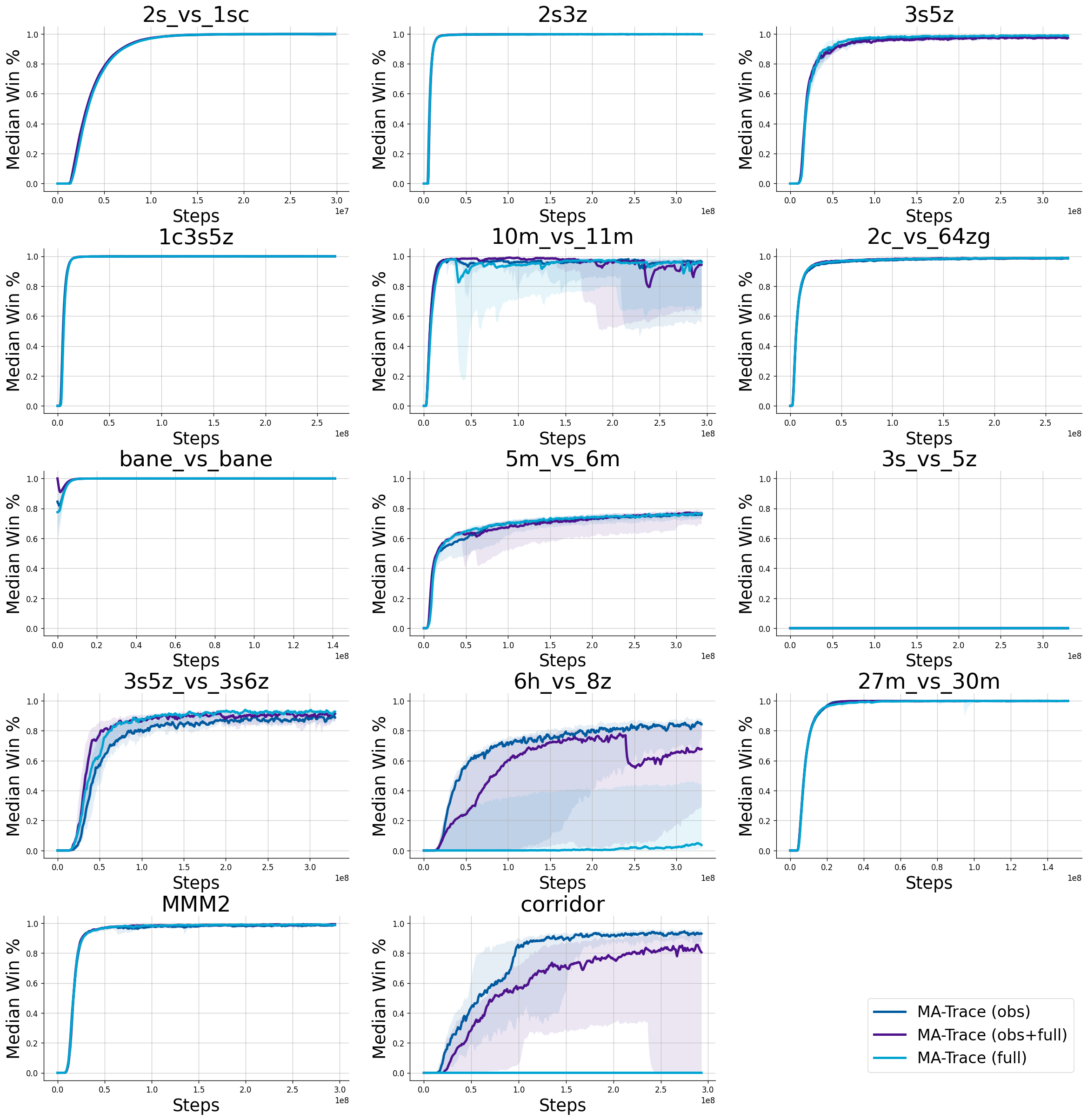}
    \caption{\small Training curves of main \mt{} versions on all the tasks available in StarCraft Multi-Agent Challenge.}
    \label{fig:all_training_curves}
\end{figure}

\end{document}